\definecolor{light-gray}{gray}{0.9} 
\newcolumntype{L}[1]{>{\raggedright\let\newline\\\arraybackslash\hspace{0pt}}m{#1}}
\newcolumntype{C}[1]{>{\centering\let\newline\\\arraybackslash\hspace{0pt}}m{#1}}
\newcolumntype{R}[1]{>{\raggedleft\let\newline\\\arraybackslash\hspace{0pt}}m{#1}}
\newcolumntype{G}[1]{>{\columncolor{light-gray}\centering\let\newline\\\arraybackslash\hspace{0pt}}m{#1}}
\newlength{\commentWidth}
\newtheorem{theorem}{Theorem}
\newcommand{\ysnote}[1]{ {\textcolor{magenta} { ***Yogesh: #1 }}} 
\newcommand{\fbsnote}[1]{ {\textcolor{blue} { ***Francesco: #1 }}}
\newcommand{\ysnoted}[1]{} 
\newcommand{\aknoted}[1]{}
\newcommand{\fbsnoted}[1]{}
\algnewcommand\Andalg{\xspace\upshape \textbf{and}\xspace}
\algnewcommand\Oralg{\xspace\upshape \textbf{or}\xspace}
\algnewcommand\Breakalg{\xspace\upshape \textbf{break}\xspace}
\algnewcommand\Falsealg{\xspace\upshape \textbf{false}\xspace}
\algnewcommand\Truealg{\xspace\upshape \textbf{true}\xspace}
\algnewcommand\Nullalg{\xspace\upshape \textbf{null}\xspace}
\newcommand*{\unit}[1]{\ensuremath{\mathrm{\,#1}}}
\newcommand{\depot}{\ensuremath{\widehat{\lambda}}\xspace}
\newcommand{\prob}{MSP\xspace} 
\newcommand{\algjsc}{\textsc{jsc}\xspace} 
\newcommand{\algvrc}{\textsc{vrc}\xspace} 
\newcommand{\algopt}{\textsc{opt}\xspace} 
\newcommand{\algjsclong}{\textsc{Job Scheduling Centric}\xspace}
\newcommand{\algvrclong}{\textsc{Vehicle Routing Centric}\xspace}
\newcommand{\algoptlong}{\textsc{Optimal Mission Scheduling}\xspace}
\newcommand{\algtwoopt}{\textsc{2-opt*}\xspace}
\newcommand{\algknn}{$k$\textsc{-nn}\xspace}
\newcommand{\algda}{\emph{default assignment}\xspace}
\newcommand{\algtas}{\emph{test and swap assignment}\xspace} 
\begin{document}
\title{Heuristic Algorithms for Co-scheduling of Edge Analytics and Routes for UAV Fleet Missions}

\author{
    \IEEEauthorblockN{
    	Aakash Khochare\IEEEauthorrefmark{1},
    	Yogesh Simmhan\IEEEauthorrefmark{1},
    	Francesco Betti Sorbelli\IEEEauthorrefmark{2}, and
    	Sajal K. Das\IEEEauthorrefmark{2}
    }
	\IEEEauthorblockA{
		\IEEEauthorrefmark{1}Department of Computer Science, Indian Institute of Science, India.
		Email: \{aakhochare, simmhan\}@iisc.ac.in\\
	}
    \IEEEauthorblockA{
    	\IEEEauthorrefmark{2}Dept. of Computer Science, Missouri Univ. of Science \& Tech., USA.
    	Email: \{francesco.bettisorbelli, sdas\}@mst.edu\\
    }
}	

\maketitle

\begin{abstract} 
Unmanned Aerial Vehicles (UAVs) or drones are increasingly used for urban applications like traffic monitoring and construction surveys. Autonomous navigation allows drones to visit \emph{waypoints} and accomplish \emph{activities} as part of their \emph{mission}. A common activity is to hover and observe a location using on-board cameras. Advances in Deep Neural Networks (DNNs) allow such videos to be analyzed for automated decision making. UAVs also host edge computing capability for on-board inferencing by such DNNs. To this end, for a fleet of drones, we propose a novel \emph{Mission Scheduling Problem (MSP)} that co-schedules the flight routes to visit and record video at waypoints, and their subsequent on-board edge analytics. The proposed schedule maximizes the utility from the activities while meeting activity deadlines as well as energy and computing constraints. We first prove that MSP is NP-hard and then optimally solve it by formulating a mixed integer linear programming (MILP) problem. Next, we design two efficient heuristic algorithms, \algjsc and \algvrc, that provide fast sub-optimal solutions. Evaluation of these three schedulers using real drone traces demonstrate utility--runtime trade-offs under diverse workloads.
\end{abstract} 

\begin{IEEEkeywords}
UAV, drone, edge computing, vehicle routing, job scheduling, energy constrained, video analytics, path planning
\end{IEEEkeywords}

\IEEEpeerreviewmaketitle

\section{Introduction}
Unmanned Aerial Vehicles (UAVs), also called \emph{drones}, are enabling a wide range of applications in smart cities~\cite{mohammed2014uavs}, such as traffic monitoring~\cite{kanistras2013survey}, construction surveys~\cite{george2019towards}, package delivery~\cite{sorbelli2020energy}, localization~\cite{sorbelli2020measurement}, and disaster (including COVID-19) management~\cite{costa2020covid}, assisted by 5G wireless roll-out~\cite{gapeyenko2018flexible}. 
The mobility, agility, and hovering capabilities of drones allow them to rapidly fly to points of interest (i.e., \emph{waypoints}) in the city to accomplish specific \emph{activities}. 
Usually, such activities involve hovering and recording a scene using the drone's camera, and analyzing the videos to take decisions. 

Advancements of computer vision algorithms and \emph{Deep Neural Networks} (DNNs) enable video analytics to be performed over such recordings for automated decision-making. 
Typically, these are inferred once the recordings are transferred to a ground station (GS) after the drones land. In-flight transfer of videos to a GS is limited by the intermittent bandwidth of current communications technologies.
However, certain activities may require low-latency analysis and decisions, as soon as the video is captured at a location. Hence, the on-board \emph{edge computing} capability~\cite{jung2018perception}
available on commercial drones can be leveraged to process the recorded videos, and quickly report concise results to the GS over 3/4/5G wireless networks~\cite{zeng2019accessing}.
Since the transferred results are brief and the \emph{on-board} processing times dominate, we ignore communication constraints like data rate, latency, and reliability that are affected by the UAV's altitude, antenna envelope, etc.


UAVs are \emph{energy-constrained vehicles} with limited battery capacity, and commercial drones can currently fly for less than an hour. The flying distance between waypoints will affect the number of activities that can be completed in one \emph{trip} on a full battery. Besides hovering and recording videos at waypoints, performing edge analytics also consumes energy. So, the drone's battery capacity should be judiciously managed for the flying, hovering and computing tasks. Nevertheless, once a drone lands, its exhausted battery can be quickly replaced with a full one, to be ready for a new trip.

This paper examines how a \emph{UAV fleet operator} in a city can plan \emph{missions} for a captive set of drones to accomplish activities periodically provided by the users. An \emph{activity} involves visiting a waypoint, hovering and capturing video at that location for a specific time period, and optionally performing on-board analytics on the captured data. Activities also offer \emph{utility} scores depending on how they are handled. The novel problem we propose here is for the fleet operator to \emph{co-schedule flight routing among waypoints \underline{and} on-board computation so that the drones complete (a subset of) the provided activities, within the energy and computation constraints of each drone, while maximizing the total utility.}

Existing works have examined routing of one or more drones for 
capturing and relaying data to the backend~\cite{motlagh2019energy}, off-loading computations from mobile devices~\cite{hu2019uav}, and cooperative video surveillance~\cite{trotta2018uavs}.
There also exists literature on scheduling tasks for edge computing that are compute- and energy-aware, operate on distributed edge resources, and consider deadlines and device reliability~\cite{meng2019dedas}. However, none of these examine co-scheduling a fleet of physical drones and digital applications on them to meet the objective, while efficiently managing the energy capacity to maximize utility.

Specifically, our \emph{Mission Scheduling Problem (MSP)} combines elements of the \emph{Vehicle Routing Problem (VRP)}~\cite{clarke1964scheduling}, which generalizes the well known Traveling Salesman Problem (TSP) to find optimal routes for a set of vehicles and customers~\cite{toth2002vehicle}, and the \emph{Job-shop Scheduling Problem (JSP)}~\cite{manne1960job} for mapping jobs of different execution duration to the available resources, which is often used for parallel scheduling of computing tasks to multiprocessors~\cite{kwok1999static}.

\ysnoted{Experiment results...anything that stands out? Counter-intuitive?}

We make the following contributions in this paper.
\begin{itemize}
    \item We characterize the system and application model, and formally define the \textit{Mission Scheduling Problem (\prob)} to co-schedule routes and analytics for a fleet of drones, maximizing the obtained utility (Sections~\ref{sec:model} and~\ref{sec:prob-def}).
    \item We prove that \prob is \textit{NP-hard}, and optimally solve it using a \textit{mixed integer linear programming (MILP)} design, \algopt, which is feasible for small inputs (Section~\ref{sec:algorithms:opt}).
    \item We design \textit{two time-efficient heuristic algorithms, \algjsc and \algvrc,} that solve the MSP for arbitrary-sized inputs, and offer complexity bounds for their execution (Section~\ref{sec:algorithms:appox}).
    \item We \textit{evaluate and analyze} the utility and scheduling runtime trade-offs for these three algorithms, for diverse drone workloads based on real drone traces (Section~\ref{sec:evaluation}).
\end{itemize}


\section{Related Work}\label{sec:related}
This section reviews literature on vehicle routing and job-shop scheduling, contrasting them with MSP and our solutions.

\subsection{Vehicle Routing Problem (VRP)}
VRP is a variant of TSP with multiple salespersons~\cite{clarke1964scheduling} and it is NP-hard~\cite{lenstra1981complexity}.
This problem has had several extensions to handle realistic scenarios, such as temporal constraints that impose deliveries only at specific time-windows~\cite{desaulniers2016exact}, capacity constraints on vehicle payloads~\cite{uchoa2017new}, multiple trips for vehicles~\cite{cattaruzza2016vehicle}, profit per vehicle~\cite{stavropoulou2019vehicle} and traffic congestion~\cite{gayialis2018developing}.
VRP has also been adapted for route planning for a fleet of ships~\cite{fagerholt1999optimal}, and for drone assisted delivery of goods~\cite{khoufi2019survey}. 

In~\cite{motlagh2019energy} 
the scheduling of \emph{events} is performed by UAVs at specific locations, involving data sensing/processing and communication with the GS. The goal here is to minimize the drone's energy consumption and operation time. Factors like wind and temperature that may affect the route and execution time are also considered. 
While they combine sensing and processing into one monolithic event, these are independent tasks which need to be co-scheduled, as we do. Also, they minimize the operating time and energy while we maximize the utility to perform tasks within a time and energy budget.

In~\cite{hu2019uav} the use of UAVs is explored to off-load computing from the users' mobile devices, and for relaying data between mobile devices and GS. The authors considered the drones' trajectory, bandwidth, and computing optimizations in an iterative manner. The aim is to minimize energy consumption of the drones and mobile devices. It is validate through simulation for four mobile devices. We instead consider a more practical problem for a fleet of drones with possibly hundreds of locations to visit and on-board computing tasks to perform.

Trotta et al.~\cite{trotta2018uavs} propose a novel architecture for energy-efficient video surveillance of points of interest (POIs) in a city by drones. The UAVs use bus rooftops for re-charging and being transported to the next POI based on known bus routes. Drones also act as relays for other drones capturing videos. The mapping of drones to bus routes is formulated as an MILP problem and a TSP-based heuristic is proposed. Unlike ours, their goal is not to schedule and process data on-board the drone. Similarly, we do not examine any data off-loading from the drone, nor any piggy-backing mechanisms.


\subsection{Job-shop Scheduling (JSC)}
Scheduling of computing tasks on drones is closely aligned with scheduling tasks on edge and fog devices~\cite{varshney2020characterizing}, and broadly with parallel workload scheduling~\cite{kwok1999static} and JSC~\cite{manne1960job}.

In~\cite{meng2019dedas}, an online algorithm is proposed for deadline-aware task scheduling for edge computing. It highlights that workload scheduling on the edge has several dimensions, and it jointly optimizes networking and computing to yield the best possible schedule. 
Feng at al.~\cite{feng2018mobile} propose a framework for cooperative edge computing on autonomous road vehicles, which aims to increase their decentralized computational capabilities and task execution performance.
Others~\cite{li2019joint} combine optimal placement of data blocks with optimal task scheduling to reduce computation delay and response time for the submitted tasks while improving user experience in edge computing.
In contrast, we co-schedule UAV routing and edge computing.

There exist works that explore task scheduling for mobile clients, and off-load computing to another nearby edge or fog resource. These may be categorized based on their use of \emph{predictable} or \emph{unpredictable} mobility models. In~\cite{ning2019mobile}, the mobility of a vehicle is predicted and used to select the road-side edge computing unit to which the computation is off-loaded. Serendipity~\cite{shi2012serendipity} takes an alternate view and assumes that mobile edge devices interacts with each other intermittently and at random. This makes it challenging to determine if tasks should be off-loaded to another proximate device for reliable completion. The problem we solve is complementary and does not involve off-loading. The possible waypoints are known ahead, and we perform predictable UAV route planning and scheduling of the computing locally on the edge.

Scheduling on the energy-constrained edge has also been investigated by Zhang et al.~\cite{zhang2017energy}, where an energy-aware off-loading scheme is proposed to jointly optimize communication and computation resource allocation on the edge, and to limit latency. Our proposed problem also considers energy for the drone flight while meet deadlines for on-board computing.

\ysnoted{Graph based approaches have been tried for VRP and JSP: \cite{beck2002graph}}


\ysnoted{Discuss scheduling of tasks to processors. Show how our proposed problem is similar to scheduling a set of DAGs that are submitted at different times, each DAG has a sequential set of tasks, and the is a utility for completing as many tasks as possible within a given deadline. Find relevant citations.}


\section{Models and Assumptions}\label{sec:model}
This section introduces the UAV system model, application model, and utility model along with underlying assumptions. 

\begin{figure*}[t]
	\centering
	\def\svgscale{1}
	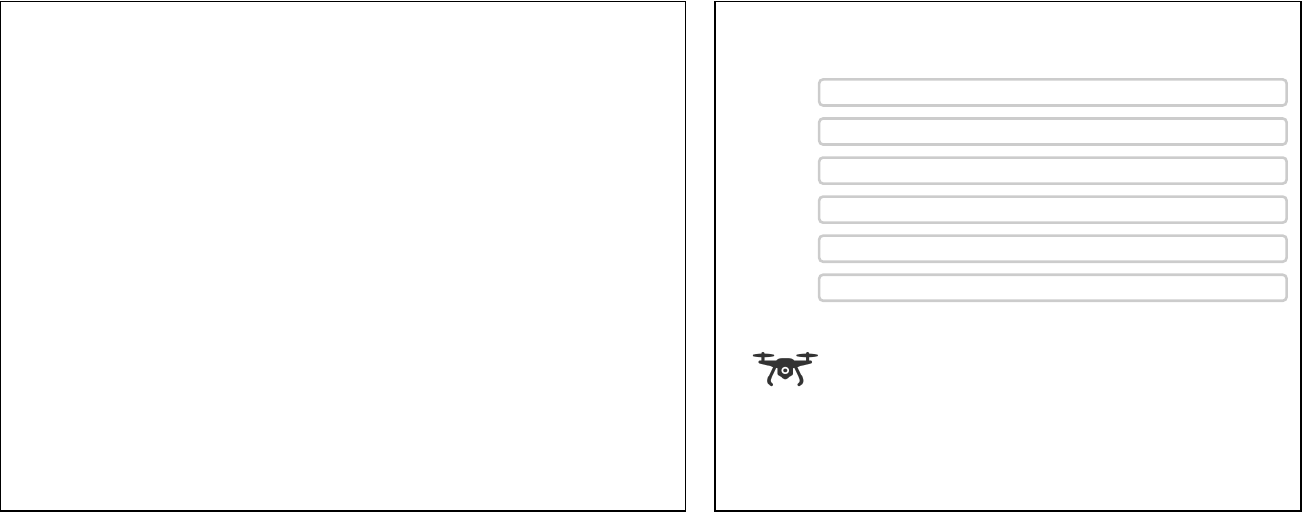
	\caption{Sample MSP scenario. a) shows a city with the depot ($\depot$); 6 waypoints to visit ($\lambda_i$) with some utility; and possible trip routes for drones ($R^i_j$). b) has the corresponding 6 activities ($\alpha_i$) with data capture duration (shaded) and compute deadline (vertical line) and the two available drones.}
	\label{fig:msp_big_picture}
\end{figure*}

\subsection{UAV System Model}
Let $\depot=(0, 0, 0)$ be the \emph{location} of a UAV depot in the city (see Figure~\ref{fig:msp_big_picture}, left) centered at the origin of a 3D Cartesian coordinate system. 
Let $D = \{d_1, \ldots, d_m\}$ be the set of $m$ available drones. For simplicity, we assume that all the drones are homogeneous. Each drone has a camera for recording videos, which is subsequently processed. This processing can be done using the on-board computing, or done offline once the drone lands (which is outside the scope of our problem). The on-board \emph{processing speed} is $\pi$ floating point operations per second (FLOPS). For simplicity, this is taken as cumulative across CPUs and GPUs on the drone, and this capacity is orthogonal to any computation done for navigation.

The battery on a drone has a fixed \emph{energy capacity} $E$, which is used both for flying and for on-board computation.
The drone's energy consumption has three components --
\emph{flying}, \emph{hovering} and \emph{computing}.
Let $\epsilon^f$ be the energy required for flying for a unit time duration at a constant energy-efficient speed $s$ within the Cartesian space; 
let $\epsilon^h$ be the energy for hovering for a unit time duration;
and let $\epsilon^c$ be the energy for performing computation for a unit time duration.
For simplicity, we ignore the energy for video capture since it is negligible in practice.
Also, a drone that returns to the depot can swap-in a full battery and immediately start a new trip.
\subsection{Application Model}
Let $A = ( \alpha_1, \ldots, \alpha_n )$ be the set of $n$ activities to be performed starting from time $\widehat{t}=0$, where each {\em activity} $\alpha_i$ is given by the tuple $\langle \lambda_i, t_i, \bar{t}_i, \kappa_i, \delta_i, \gamma_i, \bar{\gamma}_i, \bar{\bar{\gamma}}_i  \rangle$. Here, $\lambda_i=(x_i, y_i, z_i)$ is the waypoint \emph{location} coordinates where the video data for that activity has to be captured by the drone, relative to the depot location $\depot$. The \emph{starting and ending times} for performing the \emph{data capture task} are $t_i$ and $\bar{t}_i$. The \emph{compute requirements} for subsequently processing all of the captured data is $\kappa_i$  floating point operations. 
Lastly, $\delta_i$ is the \emph{time deadline} by which the \emph{computation task} should be completed on the drone to derive on-time utility of processing, while $\gamma_i, \bar{\gamma}_i$, and $\bar{\bar{\gamma}}_i$ are the \emph{data capture}, \emph{on-time processing} and \emph{on-board processing utility} values that are gained for completing the activity. These are described in the next sub-section.

The computation may be performed incrementally on subsets of the video data, as soon as they are captured. This is common for analytics over constrained resources~\cite{bianco2018benchmark}. 
Specifically, for an activity $\alpha_{i}$, the data captured between $(\bar{t}_{i} - t_{i})$ is divided into \emph{batches} of a fixed duration $\beta$, with the sequence of batches given by $B_{i} = (b_{i}^1, \ldots, b_{i}^{q_i})$, where $q_i = |B_i| = \big\lceil \frac{\bar{t}_{i} - t_{i}}{\beta} \big\rceil$. The computational cost to process each batch is $\kappa_{i}^k = \frac{\kappa_{i}}{q_i}$ floating-point operations, and is constant for all batches of an activity.
So, the \emph{processing time} for the batch, given the processing speed $\pi$ for a drone, is  $\rho_{i}^k = \big\lceil\kappa_{i}^k \cdot \frac{1}{\pi}\big\rceil$; for simplicity, we discretize all time-units into integers.

We make some simplifying assumptions. Only one batch may be executed at a time on-board a drone and it should run to completion before scheduling another. There is no concurrency, pre-emption or check-pointing. The data capture for an activity's batch may overlap with the computation of a previous batch of the same or a different activity. 
All batches for a single activity should be executed in sequence, i.e., complete processing $b_{i}^{k}$ before processing $b_{i}^{k+1}$. Once a batch is processed, its compact results are immediately and deterministically communicated to the GS.


\subsection{Utility Model}

The primary goal of the drone is to capture videos at the various activity locations for the specified duration. This is a \emph{necessary} condition for an activity to be successful. We define this as the \emph{data capture utility ($\gamma_{i}$)} accrued by a drone for an activity $\alpha_{i}$.
The secondary goal is to opportunistically process the captured data using the on-board computing on the drone. Here, we have two scenarios. Some activities may not be time sensitive, and performing on-board computing is just to reduce the costs for offline computing. Here, processing the data captured by an activity using the drone's computing resources will provide an \emph{on-board processing utility ($\bar{\gamma}_{i}$)}. Other activities may be time-sensitive and have a \emph{soft-deadline} $\delta_i$ for completing the processing. For these, if we process its captured data on the drone by this deadline, we receive an extra \emph{on-time processing utility ($\bar{\bar{\gamma}}_{i}$)}. The processing utilities accrue \emph{pro rata}, for each batch of the activity completed.


\section{Problem Formulation}\label{sec:prob-def}
The Mission Scheduling Problem (\prob) is summarized as: \emph{Given a UAV depot in a city with a fleet of captive drones, and a set of observation and computing activities to be performed at locations in the city, each within a given time window and with associated utilities, the goal is to co-schedule the drones onto mission routes and the computation to the drones, within the energy and compute constraints of the drones, such that the total utility achieved is maximized.}
It is formalized below.

\subsection{Mission Scheduling Problem (\prob)}
A UAV fleet operator receives and queues activities. Periodically, a mission schedule is planned to serve some or all the activities using the whole fleet to maximize the utility. There is a fixed cost for operating the captive fleet that we ignore.

Multiple activities can be assigned to the same drone $d_j$ as part of the drone's \emph{mission},
and the same drone $d_j$ can perform multiple \emph{trips} from the depot for a mission.
The \emph{mission activities} for the $r^{th}$ trip of a drone $d_j$ is the ordered sequence $A^r_j = ( \alpha^r_{j_1}, \ldots, \alpha^r_{j_n} ) \subseteq A$ where $\alpha^r_{j_x} \in A$, $j_n \le n$, and no activity appears twice within a mission. Further, we have $\alpha^r_{j_x} \prec \alpha^r_{j_{x+1}}$, i.e., the observation start and end times of an activity in the mission sequence fully precede those of the next activity in it, $\bar{t}^r_{j_{x}} \le t^r_{j_{x+1}} $. Also, $A^x_j \cap A^y_k = \varnothing~\forall j,k,x,y$ to ensure that an activity is mapped to just one drone. Depending on the feasibility and utility, some activities may not be part of any mission and are dropped, i.e., $\sum_{j} \sum_{r} |A^r_j| \leq n$.

The \emph{route} for the $r^{th}$ trip of drone $d_j$ is given by $R^r_j = ( \depot, \lambda^r_{j_1}, \ldots, \lambda^r_{j_n}, \depot )$, where the starting and ending waypoints of the drone are the depot location $\depot$, and each intermediate location corresponds to the video capture location $\lambda^r_{j_k}$ for the activity $\alpha^r_{j_k}$ in the mission sequence. For uniformity, we denote the first and the last depot location in the route as $\lambda^r_{j_0}$ and $\lambda^r_{j_{n+1}}$, respectively.
Clearly, $|R^r_j| = j_n + 2$.


A drone $d_j$, given the $r^{th}$ trip of its route $R^r_j$, starts at the depot, visits each waypoint in the sequence and returns to the depot, where it may instantly get a fresh battery and start the $(r+1)^{th}$ route. 
Let drone $d_j$ leave a waypoint location in its route, $\lambda^r_{j_i}$, at \emph{departure time} $\tau^r_{j_i}$ and reach the next waypoint location, $\lambda^r_{j_{i+1}}$, at \emph{arrival time} $\bar{\tau}^r_{j_{i+1}}$.
Let the function $\mathcal{F}(\lambda_p, \lambda_q)$ give the \emph{flying time} between $\lambda_i$ and $\lambda_j$. Since the drone has a constant flying speed, we have $\bar{\tau}^r_{j_{i+1}} = \tau^r_{j_i} + \mathcal{F}(\lambda^r_{j_i}, \lambda^r_{j_{i+1}})$.

The drone must hover at each waypoint $\lambda^r_{j_i}$ between $t^r_{j}$ and $\bar{t}^r_{j}$ while recording the video, and it departs the waypoint after this, i.e., $\tau^r_{j_i} = \bar{t}^r_{j_i}$. If the drone arrives at this waypoint at time $\bar{\tau}^r_{j_i}$, i.e., before the observation start time $t_j$, it \emph{hovers} here for a duration of $t^r_j - \bar{\tau}^r_{j_i}$, and then continues hovering during the activity's video capture. 
If a drone arrives at $\lambda^r_{j_i}$ after $t^r_j$, it is invalid since the video capture for the activity cannot be conducted for the whole duration. So, $\bar{\tau}^r_{j_i} \le t^r_{j_i} \le \tau^r_{j_i}$.
Also, since the deadline for on-time computation over the captured data is $\delta^r_{j_i}$, we require $\delta^r_{j_i} \ge \bar{t}^r_{j_i}$. 
Once the drone finishes capturing video for the last activity in its $r^{th}$ trip, it returns back to the depot location at time $\bar{\tau}^r_{j_{n+1}} = \tau^r_{j_n} + \mathcal{F}(\lambda^r_{j_n}, \widehat{\lambda})$.
Hence, the \emph{total flying time} for a drone $d_j$ for its $r^{th}$ trip is:
\[ f^r_j = \sum_{i=0}^{n} (\bar{\tau}^r_{j_{i+1}} - \tau^r_{j_i}) \]
%

\noindent
and the \emph{total hover time} for the drone on that trip is:

\[ h^r_j = \sum_{i=1}^{n} (t^r_{j_i} - \bar{\tau}^r_{j_i}) + \sum_{i=1}^{n} (\bar{t}^r_{j_i} - t^r_{j_i}) = \sum_{i=1}^{n} (\bar{t}^r_{j_i} - \bar{\tau}^r_{j_i}) \]
which includes hovering due to early arrival at a waypoint, and hovering during the data capture.


Let the scheduler assign the \emph{time slot} $[\theta_{j_i}^{k}, \bar{\theta}_{j_i}^{k})$ for executing a batch $b_{j_i}^{k}$ of activity $\alpha_{j_i}$ on drone $d_j$, where $\bar{\theta}_{j_i}^{k} = \theta_{j_i}^{k} + \rho_{i}^{k}$, based on the batch execution time.
%
%
%
%
We define a completion function for each activity $\alpha_{j_i}$, for the three utility values:

\begin{itemize}
	\item The \emph{data capture completion} $u_{j_i} \in \{0, 1\}$ has a value of $1$ if the drone hovers at location $\lambda_{j_i}$ for the entire period from $t_{j_i}$ to $\bar{t}_{j_i}$, and is $0$ otherwise. 
	
    \item The \emph{on-board completion} $0.0 \le \bar{u}_{j_i} \le 1.0$ indicates the fraction of batches of that activity that are completed on-board the drone. 
    Let $\bar{\mu}^k_i=1$ if the batch $b^k_i$ of activity $\alpha_i$ is completed on-board, 
    and $\bar{\mu}^k_i=0$ if it is not completed on-board the drone. Then, $\bar{u}_{j_i} = \frac{\sum_k \bar{\mu}^k_i}{q_i}$.
    
    \item The \emph{on-time completion} $0.0 \le \bar{\bar{u}}_{j_i} \le 1.0$ gives the fraction of batches of that activity that are fully completed within the deadline. 
    As before, let $\bar{\bar{\mu}}^k_i=1$ if the batch $b^k_i$ of activity $\alpha_i$ is completed 
    on-time, i.e., $\bar{\theta}_{i}^{k} \leq \delta_i$, and $\bar{\bar{\mu}}^k_i=0$ otherwise. So, $\bar{\bar{u}}_{j_i} = \frac{\sum_k \bar{\bar{\mu}}^k_i}{q_i}$.
\end{itemize}

The \emph{total utility} for an activity $\alpha_i$ is
$U_i = u_{i} \gamma_i  + \bar{u}_{i} \bar{\gamma}_i + \bar{\bar{u}}_{i} \bar{\bar{\gamma}}_i$,
and the \emph{total computation time} of batches on a drone $d_j$ is:
\[c_j = \sum_{\alpha_i \in A} {(\bar{\mu}^k_{j_i} + \bar{\bar{\mu}}^k_{j_i}) \cdot \rho^k_i} \]

\subsection{Optimization of \prob}
Based on these, the \textbf{objective} of the optimization is $\arg \max \sum_{\alpha_i \in A} {U_i}$, i.e., assign drones to activity waypoints and activity batches to the drones' computing slots to maximize the utility from data capture, on-board and on-time computation.
These are subject to the following constraints on the execution slot assignments for a batch on a drone:
\[ (t_{j_i} + k \cdot \beta) \leq \theta_{j_i}^{k} ~\qquad~ \bar{\theta}_{j_i}^{k} \leq \theta_{j_i}^{k+1} ~\qquad~ \bar{\theta}_{i}^{k} \leq \bar{\tau}_{j_{n+1}}\]
i.e., the data capture for a duration of $\beta$ for the $k^{th}$ batch of the activity is completed before the execution slot of the batch starts; the batches for an activity are executed in sequence; and the execution completes before the drone lands.

Also, there can only be one batch executing at a time on a drone. So  $\forall [\theta_{j_p}^{x}, \bar{\theta}_{j_p}^{x})$ and $[\theta_{j_q}^{y}, \bar{\theta}_{j_q}^{y})$ slots assigned to batches $b_p^x$ and $b_q^y$ on drone $d_j$, we have $[\theta_{j_p}^{x}, \bar{\theta}_{j_p}^{x}) \cap [\theta_{j_q}^{y}, \bar{\theta}_{j_q}^{y}) = \varnothing$.

Lastly, the \emph{energy expended} by drone $d_j$ on the $r^{th}$ trip, to fly, hover and compute, should be within its battery capacity:

\[
    E^r_j =  f^r_j \epsilon^f + h^r_j \epsilon^h + c^r_j \epsilon^c \le E
\]

\noindent{\em Model Applicability:}
Our novel model can be abstracted to describe diverse applications.
In \emph{entity localization}~\cite{de2015board}, $\bar{\gamma}_{i}=0$ and $\bar{\bar{\gamma}}_{i}>0$ captures the importance of an entity being tracked.
In \emph{traffic monitoring}~\cite{kanistras2013survey} it is useful to have timely insights, appropriately tuning $\bar{\gamma}_{i}$ and $\bar{\bar{\gamma}}_{i}$. 
In \emph{construction survey}~\cite{george2019towards} there are no strict time deadlines, so $\bar{\bar{\gamma}}_{i}=0$.


\begin{table*}[t]
\centering
\caption{Constraints for \algopt MILP formulation.}
\label{tab:constraints}
\begin{tabular}{p{0.15cm}|p{9.5cm}|p{7cm}}
\toprule
\bf C. & \bf Expression & \bf Meaning \\
\midrule

$1$ & $\sum_{k \in \mathcal{D}} \sum_{l \in \mathcal{R}} \sum_{j\in \overrightarrow{i}} x_{ij}^{kl} \leq 1, 
\qquad \forall i \in  \mathcal{V}'$  & The waypoint for an activity $\alpha_i$ is visited only once.\\

$2$ & $\sum_{j \in \overrightarrow{0}} x_{0j}^{kl} - \sum_{j \in \overleftarrow{0}} x_{j0}^{kl} = 0, 
\qquad \forall k \in \mathcal{D}, l \in \mathcal{R}$  &  A drone trip $l$ starting from the depot must also end there.\\

$3$ & $\sum_{j \in \overrightarrow{0}} x_{0j}^{kl} = 1 \iff \sum_{j \in \overrightarrow{i}} x_{ij}^{kl} = 1, 
\qquad \forall i \in  \mathcal{V}', k \in \mathcal{D}, l \in \mathcal{R}$  &  A drone $k$ must visit at least one waypoint on each trip $l$.\\

$4$ & $\sum_{i \in \overleftarrow{j}} x_{ij}^{kl} - \sum_{i \in \overrightarrow{j}} x_{ji}^{kl} = 0,  
\qquad \forall k \in \mathcal{D}, j \in \mathcal{V}', l \in \mathcal{R}$  &  A drone $k$ visiting waypoint $j$ must also fly out from there.\\

$5$ & $\left(t_j - \mathcal{F}_{0j}\right) \cdot \sum_{k \in \mathcal{D}}\sum_{l \in \mathcal{R}} x_{0j}^{kl} \ge 0, 
\qquad \forall j \in \mathcal{V}'$  &  Any drone flying to waypoint $j$ from the depot must reach before its observation start time $t_j$.\\

$6$ & $ (t_j - \bar{t}_i - \mathcal{F}_{ij}) \cdot \sum_{k \in \mathcal{D}}\sum_{l \in \mathcal{R}} x_{ij}^{kl} \ge 0, 
\qquad \forall i \in \mathcal{V}', j \in \overrightarrow{i}$  &  Any drone flying to waypoint $j$ from $i$ must reach before its observation start time $t_j$.\\

${7}$ & $\bar{\tau}^l_{k_{n+1}} = \sum_{i \in \mathcal{V}'} x_{i0}^{kl} \cdot (\bar{t}_i + \mathcal{F}_{i0}),
\qquad \forall k \in \mathcal{D}, l \in \mathcal{R}$  &  Decides the landing time of drone $k$ at the depot after trip $l$.\\

${8}$ & $\bar{\tau}^l_{k_{n+1}} \le \tau_{\max}, 
\quad \forall k \in \mathcal{D}, l \in \mathcal{R}$  &  Depot landing times for all trips is within the maximum time.\\

\hline

$9$ & $t_i + (g + 1) \cdot \beta \le \theta^g_i,
\qquad \forall i \in \mathcal{V}', g \in \mathcal{B}_i$  &  Batch $g$ of activity $\alpha_i$ must be observed before it is processed. \\

${10}$ & $\bar{\theta}^g_i < \theta^{g+1}_i,
\qquad \forall i \in \mathcal{V}', g \in \mathcal{B}_i$  &  Processing of batch $g$ of activity $\alpha_i$ must precede batch $g+1$. \\

${11}$ & $\sum_{j \in \overrightarrow{i}} x_{ij}^{kl} + \sum_{b \in \overrightarrow{a}} x_{ab}^{kl} - 1 \leq w^{gh}_{ia} + w^{hg}_{ai},
\qquad \forall i,a \in \mathcal{V}', i < a, g \in \mathcal{B}_i, h \in \mathcal{B}_a, k \in \mathcal{D}, l \in \mathcal{R}$   &  \multirow{2}{7cm}{Compute time slots of two batches $g$ and $h$ from activities $\alpha_i$ and $\alpha_a$ on the same drone $k$ and trip $l$ should not overlap \cite{manne1960job}.} \\

${12}$ & $\bar{\theta}^g_i - \theta^h_a \leq M \cdot (1 - w^{gh}_{ia}),
\qquad \forall i,a \in \mathcal{V}, i \neq a, g \in \mathcal{B}_i, h \in \mathcal{B}_a $  &  \\

${13}$ & $y_{ik}^{lg} = 1 \Rightarrow \bar{\theta}^g_i + M \left(1-\sum_{j \in \overrightarrow{i}} x_{ij}^{kl}\right) \le \delta_i, ~~~~~\forall i \in \mathcal{V}', g \in \mathcal{B}_i, k \in \mathcal{D}, l \in \mathcal{R}$  &  Decision variable for batches that complete before deadline.\\

${14}$ & $z_{ik}^{lg} = 1 \Rightarrow \bar{\theta}^g_i + M\left(1 - \sum_{j \in \overrightarrow{i}} x_{ij}^{kl}\right) \le \bar{\tau}^l_{k_{n+1}},
\forall i \in \mathcal{V}', g \in \mathcal{B}_i, k \in \mathcal{D}, l \in \mathcal{R}$  &  Decision variable for batches that complete before landing.\\

\hline

${15}$ & $\sum_{i \in \mathcal{V}} \Big( \sum_{j \in \overrightarrow{i}} \left( x_{ij}^{kl} \cdot \mathcal{F}_{ij} \cdot \epsilon^f \right) ~+~  \allowbreak
\sum_{g \in \mathcal{B}_i} \left(z_{ik}^{lg} \cdot \kappa^g_i \cdot \epsilon^c \right) ~+~ \allowbreak
{\qquad \sum_{j \in \overrightarrow{i}} \left( x_{ij}^{kl} \cdot (\bar{t}_j \!-\! (\bar{t}_i + \mathcal{F}_{ij})) \cdot \epsilon^h \right)} \Big)  
\le E,
\quad \forall k \in \mathcal{D}, l \in \mathcal{R}$  &  Sum of energy consumed for flying, hovering and computing on trip $l$ of drone $k$ should be within the battery capacity. \\





\bottomrule
\end{tabular}
\end{table*}

\section{Optimal Solution for \prob}
\label{sec:algorithms:opt}

In this section, we prove that \prob is NP-hard, and we define an optimal, but computationally slow, algorithm called \algoptlong (\algopt) based on MILP.
\ysnoted{We're actually defining a solution here, not a problem. Should we call it "Optimal Mission Scheduler (OPT)"? The short form OPT is also earlier to remember for readers.}

\subsection{NP-hardness of \prob}
As discussed earlier, the \prob combines elements of the VRP and the JSP in assigning routes and batches to drones, for maximizing the overall utility, subject to energy constraints.

\begin{theorem}
\prob is NP-hard.
\end{theorem}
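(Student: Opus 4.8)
The plan is to establish NP-hardness through a polynomial-time reduction from the decision version of the \emph{0/1 Knapsack} problem: given items with integer sizes $s_1,\dots,s_n$, integer values $v_1,\dots,v_n$, a capacity $C$ and a target $V$, decide whether some subset $S$ satisfies $\sum_{j\in S}s_j\le C$ and $\sum_{j\in S}v_j\ge V$ (a well-known NP-complete problem). Rather than stress the VRP side of \prob, I would deliberately trivialize the routing: take a single drone ($m=1$), place every activity's waypoint at one common location $\lambda=(C,0,0)$ (so all inter-waypoint flights take zero time and the one flight to and from the depot takes $C$ time units each way), set $\pi=1$, $\beta=1$, $E=C$, and zero out the flight and hover energy, $\epsilon^f=\epsilon^h=0$, so that only on-board computation draws from the battery.

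For each item $j$ I would create an activity $\alpha_j$ with observation window $[\,C+2j-1,\;C+2j\,]$ --- these windows are pairwise disjoint and already sorted by index, so all $n$ activities may be served in one trip in index order --- compute requirement $\kappa_j=s_j$ (hence $q_j=1$ batch whose processing time is $\rho_j=s_j$), deadline $\delta_j=\bar t_j$ (its minimum legal value, so that no batch can ever be completed ``on time'' and the compute-energy term stays simple), and utilities $\gamma_j=\bar{\bar\gamma}_j=0$, $\bar\gamma_j=v_j$. The query posed to \prob is: is there a mission schedule of total utility at least $V$?

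The correctness argument then proceeds as follows. A trip turnaround costs $2C$ time, which exceeds the length-$2$ gaps between consecutive observation windows, so the drone is forced into a single trip that visits the chosen waypoints in index order and lands at time $2C+2n$; the only remaining degree of freedom is the set $S$ of activities whose (single) batch is executed on-board. For any such $S$, every one of its batches is released by time $C+2n$ and they can be run back-to-back, finishing by $C+2n+\sum_{j\in S}s_j\le 2C+2n$ exactly when $\sum_{j\in S}s_j\le C$, which is precisely the per-trip energy bound $\sum_{j\in S}\rho_j\epsilon^c\le E$; meanwhile the utility collected is $\sum_{j\in S}\bar u_j\bar\gamma_j=\sum_{j\in S}v_j$ since the other two utility weights vanish. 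Hence a schedule of utility $\ge V$ exists iff the Knapsack instance is a yes-instance, and the construction is clearly polynomial, so \prob is NP-hard; the optimization form inherits the hardness.

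The part I expect to be the real obstacle is not the combinatorics but respecting \prob's rigid timing semantics while fixing the parameters: a drone must leave a waypoint exactly at the observation end time and arrive at the next before its start time, a batch's release time is pinned to when its data is captured, and all on-board processing must finish before landing. I would therefore take care to (i) space the windows (gap $2$, whole schedule shifted by $C$, rescaling the instance so $C\ge 2$ if necessary) so that the single index-ordered trip is the only feasible route and every candidate $S$ is automatically time-feasible, and (ii) check the weak inequalities ($\bar\tau\le t$ at arrivals, $\bar\theta\le\bar\tau_{n+1}$ at landing) for the chosen integer values, bumping an offset by one if an equality is awkward. It is also worth remarking that the same skeleton with $m$ drones and a reduction from \emph{3-Partition} (give each drone's trip the energy budget $B$ and let $\bar\gamma_j=\rho_j=a_j$) would strengthen the result to strong NP-hardness, and that taking $\kappa_i\equiv 0$ instead reduces from a time-windowed orienteering/routing problem, so the hardness can be traced to either the scheduling component or the routing component alone.
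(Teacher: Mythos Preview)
Your reduction from $0/1$ Knapsack is a genuinely different route from the paper's. The paper does not give an explicit reduction to \prob at all; it argues informally that \prob inherits the structure of several NP-hard VRP variants (multi-trip, time-windowed, energy-constrained, profit-based) and of JSP, and that relaxing \prob's extra features recovers those problems. Your construction instead neutralises the routing side entirely (one drone, coincident waypoints, free flying and hovering) and isolates the on-board compute-energy budget as a pure knapsack capacity --- a more concrete and self-contained demonstration, and one that naturally upgrades to the strong-NP-hardness result via $3$-Partition that you sketch at the end.

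One step does not go through as written, though. Your single-trip claim (``a turnaround costs $2C$, which exceeds the length-$2$ gaps'') only rules out a depot return between \emph{consecutive} activities. If $n$ is large relative to $C$, the drone can land after $\alpha_k$ at time $2C+2k$, skip roughly $C$ activities, and still reach some $\alpha_{k'}$ on a fresh trip, since all you need is $3C+2k\le C+2k'-1$, i.e.\ $k'-k\ge C+1$. Two trips give two independent energy budgets of $C$, so the ``only if'' direction of your Knapsack equivalence breaks. The remedy is exactly the rescaling idea you already invoke, but with the correct bound: multiply all item sizes and $C$ by $n$ (so that $C\ge n$ rather than merely $C\ge 2$), making $k'-k\ge C+1>n$ impossible and genuinely forcing a single trip. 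With that adjustment the argument is complete.
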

\begin{proof}
The VRP is NP-hard~\cite{lenstra1981complexity}.
In addition, \prob considers multiple-trips, time-windows, energy-constraints, and utilities.

The VRP variant with multiple-trips (MTVRP), which considers a maximum travel time horizon $T_{h}$, is NP-hard. Any instance of VRP can be reduced in polynomial time to MTVRP by fixing the number of vehicles to the number of waypoints, $m = n$, and setting the time horizon $T_{h} = \sum_{e \in \mathcal{E}} \mathcal{F}(e)$, where $\mathcal{E}$ is the set of edges and $\mathcal{F}(e)$ is the flying time for traversing an edge~\cite{olivera2007adaptive}, and limiting the number of trips to one.
The VRP variant with time-windows (TWVRP), which limits the start and end time for visiting a vertex, $[t_i, \bar{t}_i)$, is NP-hard. Any instance of VRP can be reduced in polynomial time to TWVRP by just setting $t_i = 0$ and $\bar{t}_i = + \infty$~\cite{toth2002vehicle}.
Clearly, a VRP variant with energy-constrained vehicles is still NP-hard, by just relaxing those constraints to match VRP.

In the above VRP variants, the goal is only to minimize the costs. But \prob aims at maximizing the utility while bounding the energy and compute budget.
In literature, the VRP variant with profits (PVRP) is NP-hard~\cite{cattaruzza2016vehicle} since any instance of MTVRP can be reduced in polynomial time to PVRP by just setting all vertices to have the same unit-profit.
Moreover, \prob has to deal with scheduling of batches for maximizing the profit.
The original JSP is NP-hard~\cite{graham1966bounds}. So, any variant which introduces constraints is again NP-hard by a simple reduction, by relaxing those constraints, to JSP.

As \prob is a variant of VRP and JSP, it is NP-hard too.
\end{proof}

\subsection{The \algopt Algorithm}
The \algoptlong (\algopt) algorithm offers an optimal solution to \prob 
by modeling it as a multi-commodity flow problem (MCF), similar to~\cite{zmazek2006multiple, trotta2018uavs}. 
We reformulate the \prob definition as an MILP formulation.

The paths in the city are modeled as a \emph{complete graph}, $\mathcal{G} = (\mathcal{V}, \mathcal{E})$, between the $n$ activity waypoint vertices, $\mathcal{V} = \{0, 1, \ldots, n\}$, where $0$ is the depot \depot.
Let $\overrightarrow{i}$ and $\overleftarrow{i}$ be the set of \emph{out-edges} and \emph{in-edges} of a vertex $i$, and  $\mathcal{V}' = \mathcal{V} \setminus \{ 0 \}$ be the set of all waypoint vertices.
We enumerate the $m$ drones as $\mathcal{D} = \{1, \ldots, m\}$. 
Let $\tau_{\max}$ be the maximum time for completing all the missions, and
$r_{\max}$ the maximum trips a drone can do. Let $\mathcal{R} = \{1, \ldots, r_{\max}\}$ be the possible trips.

Let $x_{ij}^{kl} \in \{0,1\}$ be a decision variable that equals $1$ if the drone $k \in \mathcal{D}$ in its trip $l \in \mathcal{R}$ traverses the edge $(i,j)$, and $0$ otherwise. If $x_{ij}^{kl} = 1$ for $i \in \mathcal{V}'$, then the waypoint for activity $\alpha_i$ was visited by drone $k$ on trip $l$.
Let $\mathcal{B}_i = \{0, \ldots, q_i\}$ be the set of batches of activity $\alpha_i$.
Let $w^{gh}_{ia}$ be a binary decision variable used to linearize the batch computation  whose value is $1$ if batch $b^g_i$ is processed before $b^h_a$, and $0$ otherwise~\cite{manne1960job}.

Let $y_{ig}^{kl}$ be a decision variable that equals $1$ if the drone $k \in \mathcal{D}$ in trip $l \in \mathcal{R}$ processes the batch $g$ of activity $\alpha_i$ within its deadline $\delta_i$, and $0$ otherwise; and similarly, $z_{ig}^{kl}$ equals $1$ if the batch is processed before the drone completes the trip and lands, and $0$ otherwise.
Let the per batch utility for on-board completion be $\bar{\Gamma}_i = \frac{\bar{\gamma}_i}{q_i}$, and on-time completion be $\bar{\bar{\Gamma}}_i = \frac{\bar{\bar{\gamma}}_i}{q_i}$, for activity $\alpha_i$.
Finally, let $M$ be a sufficiently large constant.


Using these, the MILP objective is:

{\small 
\begin{align}
\max \sum_{k \in \mathcal{D}} \sum_{l \in \mathcal{R}} \sum_{i \in \mathcal{V}} \bigg( \sum_{j \in \overrightarrow{i}}  x_{ij}^{kl} \cdot \Gamma_i \bigg)\!\! + \!\!\bigg( \sum_{g \in \mathcal{B}_i} y_{ig}^{kl} \cdot \bar{\Gamma_i}  +  z_{ig}^{kl} \cdot \bar{\bar{\gamma_i}}\bigg) \label{eq:main}
\end{align}
}

\noindent subject to the constraints listed in Table~\ref{tab:constraints}.



\section{Heuristic Algorithms for \prob}
\label{sec:algorithms:appox}

Since \prob is NP-hard, \algopt is tractable only for small-sized inputs.
So, time-efficient but sub-optimal algorithms are necessary for larger-sized inputs.
In this section, we propose two heuristic algorithms, called \algjsclong (\algjsc) and \algvrclong (\algvrc).


\subsection{The \algjsc Algorithm}
The \algjsclong (\algjsc) algorithm aims to find near-optimal scheduling of batches while ignoring the optimizations of routing to conserve energy.
\algjsc is split into two phases: \emph{clustering} and \emph{scheduling}.

\subsubsection{Clustering Phase}
First, we use the ST-DBSCAN algorithm~\cite{birant2007st} to find time-efficient spatio-temporal clusters of activities.
It returns a set of clusters $\mathbb{C}$ such that for activities within a cluster $C_i \in \mathbb{C}$, certain spatial and temporal distance thresholds are met.
Drones are then allocated to clusters depending on their availability.
For each cluster $C_i$, let $T_i^U = \max_{\alpha_j \in C_i}{(\bar{t}_j + \mathcal{F}(\lambda_j, \depot))}$ be the upper bound for the \emph{latest landing time} for a drone servicing activities in $C_i$;
analogously, let $T_i^L = \min_{\alpha_j \in C_i}{(t_j-\mathcal{F}(\depot, \lambda_j))}$  
be the lower bound for the \emph{earliest take-off time}.
Then, all the temporal windows $[T_i^L, T_i^U]$ for each $C_i \in \mathbb{C}$ are sorted with respect to $T_i^L$.
Recalling that there are $m$ drones available at $\widehat{t}=0$, they are proportionally allocated to clusters depending on the current availability, which in turn depends on the temporal window.
So, $c_1 = \frac{m}{n}\cdot |C_1|$ drones are allocated to $C_1$ at time $T_1^L$ and released at time $T_1^U$; $c_2 = \frac{m-c_1}{n} \cdot |C_2|$ allocated to $C_2$ from $T_2^L$ to $T_2^U$ (assuming $T_2^L < T_1^U$), and so on.
\ysnoted{Just to be clear, in this e.g., the time window of C1 and C2 overlap and hence we have (m-c1)? Else it would be m. \fbsnote{yes yes, they overlap.}}

\subsubsection{Scheduling Phase}
Here, the activities are assigned to drones.
The \emph{feasibility} of assigning $\alpha_{i}$ to $d_j$, is tested by checking if the required flying and hovering energy is enough to visit $A_j \cup \alpha_i$; 
here, we ignore the batch processing energy. 
If feasible, the drone can update its take-off and landing times accordingly, and then schedule the subset of batches $\widehat{B_i} \subseteq B_i$ within the energy requirements.
Assignments are done in two steps: \algda and \algtas.

\noindent {\textbf{Default Assignment.}}
For each $b_i^k \in \widehat{B_i}$, let $P_{b^k_i} = [t_k + i \beta, \delta_k)$ be the \emph{preferred interval}; $Q_{b^k_i} \subseteq P_{b^k_i}$ be the \emph{available preferred sub-intervals}, i.e., the set of periods where no other batch is scheduled; and $S_{b^k_i} = [\delta_k, \bar{\tau}_{j_{n+1}})$ be the \emph{schedulable interval}, which exceeds the deadline but completes on-board.\ysnoted{Swapping P and Q; P is now "preferred"}\fbsnoted{Great idea, P as preferred!}
Clearly, $P_{b^k_i} \cap S_{b^k_i} = \varnothing$.
The \emph{default schedule} determines a suitable time-slot for $b_i^k$.
If $Q_{b^k_i} \neq \varnothing$, $b_i^k$ is \emph{first-fit} scheduled within intervals of $Q_{b^k_i}$; else, if $Q_{b^k_i} = \varnothing$, the same first-fit policy is applied over intervals of $S_{b^k_i}$.
If $b_i^k$ cannot be scheduled even in $S_{b^k_i}$, it remains unscheduled.

\noindent {\textbf{Test and Swap Assignment.}}
If the \algda has batches that \emph{violate their deadline}, i.e., scheduled in $S$ but not in $P$, we use the \algtas to improve the schedule.
Let $P^+_i =  \bigcup_i {P_{b^k_i}}$ be the union of the preferred intervals forming the \emph{total preferred interval} for an activity $\alpha_i$.
Each batch $b^k_i$ is tested for violating its deadline. \ysnoted{If $b^k_i$ violates, then $b^{k+1}_i$ must also violate}
If it violates, then batches $b^h_j$ from other activities already scheduled in $P^+_i$ are identified and tested if they too violate their deadline.
If so, $b^h_j$ is moved to the next available slot in $S_{b^h_j}$, and its old time slot given to $b^k_i$.
If $b^h_j$ is in its \emph{preferred interval} but has more slots available in this interval, then $b^h_j$ is moved to another free slot in $P_{b^h_j}$ and $b^k_i$ assigned to the slot that is freed. 
Else, the current configuration does not contain violations, except for the current batch $b^k_i$, but all available slots are occupied.
So, the utility for $b^k_i$ is compared with another $b^h_j$ in $P^+_i$,
and the batch with a higher utility gets this slot.


\subsubsection{The Core of \algjsc}
The \algjsc algorithm works as follows (Algorithm~\ref{alg:algjsc}).
After the initial \emph{clustering phase}, activities are tested for their feasibility.
If so, the \algda is initially evaluated in terms of total utility.
If this creates deadline violation, the \algtas performed, and the best scheduling is applied.


\begin{algorithm}[htbp]
    $\mathbb{C} \gets $ \emph{clustering phase}\;\label{code:2-preproc}
    \For {$C_k \in \mathbb{C}$} {
        \For {$\alpha_i \in C_k$} {
            \For {$d_j$ assigned to $C_k$} {
                \If {$\alpha_i \cup A_j$ is feasible} {\label{code:2-feasibility}
                    apply best scheduling among \emph{default} and \algtas on $\widehat{B_i}$\;\label{code:2-best}
                    
                    
                }
            }
        }
    }
	\caption{$\algjsc(A, D)$}
	\label{alg:algjsc}
\end{algorithm}

\subsubsection{Time Complexity of \algjsc}
ST-DBSCAN's time complexity is $\mathcal{O}(n \log n)$ for $n$ waypoints.
Unlike $k$-means clustering, ST-DBSCAN automatically picks a suitable number of clusters,
$k$, with $\approx \frac{n}{k}$ waypoints each.
For $k$ times, we compute the min-max of sets of size $\frac{n}{k}$, sort the $k$ elements and finally make $\frac{n}{k}$ assignments.
So this drones-to-clusters allocation takes $\mathcal{O}(k \frac{n}{k} + k \log k + \frac{n}{k})$ time.
Hence, this \emph{clustering phase} takes $\mathcal{O}(n \log n)$ time.


For the \algtas, we maintain an interval tree for fast temporal operations. 
If $l$ is the maximum number of batches to schedule per activity, building the tree costs $\mathcal{O}(\frac{nl}{k} \log(\frac{nl}{k}))$, while search, insertion and deletion cost $\mathcal{O}(\log(\frac{nl}{k}))$.
Finding free time slots makes a pass over the batches in $\mathcal{O}(\frac{nl}{k})$. 
This is repeated for $l$ batches, to give an overall time complexity of $\mathcal{O}(\frac{nl}{k} \log(\frac{nl}{k}) + \frac{n}{k}l^2)$.
Also the \algda relies on the same interval tree, reporting the same complexity as \algtas.

Finally, for the $k$ clusters and each application in a cluster, two schedule assignments are calculated for all the drones. 
Thus, the time complexity of \algjsc is $\mathcal{O}(n \log n) + \mathcal{O}(k \frac{n}{k}  m (\frac{nl}{k} \log(\frac{nl}{k}) + \frac{n}{k}l^2))$.
However, since the clustering can result in single cluster, $m \rightarrow n$, and the \emph{overall complexity} of \algjsc is $\mathcal{O}(n^3 l^2)$ in the worst case.

\subsection{The \algvrc Algorithm}
The \algvrclong (\algvrc) algorithm aims to find near-optimal waypoint routing while initially ignoring efficient scheduling of the batch computation.
\algvrc is split into three phases: \emph{routing}, \emph{splitting}, and \emph{scheduling}. 

\subsubsection{Routing Phase}
In this phase, \algvrc builds routes while satisfying the \emph{temporal constraint} for activities, i.e., for any two consecutive activities $(\alpha_i, \alpha_{i+1})$ in the route, $\bar{t}_i + \mathcal{F}(\lambda_i, \lambda_{i+1}) \le t_{i+1}$.
This is done using a modified version of $k$-nearest neighbors (\algknn) algorithm, whose solution is then locally optimized using the \algtwoopt heuristic~\cite{potvin1995exchange}.

The modified \algknn works as follows:
Starting from \depot, a route is iteratively built by selecting, from among the $k$ nearest waypoints which meet the \emph{temporal constraint}, 
the one, say, $\lambda_1$ whose activity has the earliest \emph{observation start time}. 
This process resumes from $\lambda_1$ to find $\lambda_2$, and so on until there is no feasible neighbor. \depot is finally added to conclude the route.
This procedure is repeated to find other routes until all the possible waypoints are chosen.
This initial set of routes is optimized to minimize the flying and hovering energy using \algtwoopt, which lets us find a local optimal solution from the given one~\cite{toth2002vehicle}.
However, routes found here may be infeasible for a drone to complete within its energy constraints.

\subsubsection{Splitting Phase}
Say $R_{i,j} = (\depot, \lambda_i, \ldots, \lambda_j, \depot)$ be an energy-infeasible route from the routing phase, which visits $\lambda_i$ and $\lambda_j$ as the first and last waypoints from \depot. 
The goal is to find a suitable waypoint $\lambda_g$ for $i \le g < j$ such that by splitting $R_{i,j}$ at $\lambda_g$ and $\lambda_{g+1}$, we can find an energy-feasible route while also improving the overall utility and reducing scheduling conflicts for batches.
For each edge $(\lambda_g,\lambda_{g+1})$, we compute a \emph{split score} whose value sums up three components: \emph{energy score}, \emph{utility score}, and \emph{compute score}.


\noindent {\textbf{Energy score.}}
Let $E(a,b)$ be the cumulative flying and hovering energy required for some route $R_{a,b} \subseteq R_{i,j}$. 
Here we sequentially partition the route $R_{i,j}$ into multiple \emph{viable trips} $R_{(i,k_{1}-1)}$, $R_{(k_{1},k_{2}-1)}$, \ldots, $R_{(k_{x},j)}$ such that each is a maximal trip and is energy-feasible, i.e., $E(k_{y},k_{y+1}-1) \leq E$ while $E(k_{y},k_{y+1}) > E$.
For each edge $(\lambda_g, \lambda_{g+1}) \in R_{(k_{y},k_{y+1}-1)}$, the \emph{energy score} is the ratio $\frac{E(k_{y},g)}{E} \leq 1$.
A high value indicates that a split at this edge improves the battery utilization.


\noindent \textbf{{Utility score.}}
Say $U(a,b)$ gives the cumulative data capture utility from visiting waypoints in a route $R_{a,b} \subseteq R_{i,j}$. Say edge $(\lambda_g, \lambda_{g+1}) \in R_{(k_{y},k_{y+1}-1)} \subseteq R_{i,j}$ is also part of a viable trip from above. Here, we find the data capture utility of a sub-route of $R_{i,j}$ that starts a new maximal viable trip at $\lambda_{g+1}$ and spans until $\lambda_{l}$, as $U(g,l)$. The utility score of edge $(\lambda_g, \lambda_{g+1})$ is the ratio between this new maximal viable trip and the original viable trip the edge was part of, $\frac{U(g,l)}{U(k_{y},k_{y+1}-1)}$. 
A value $>1$ indicates that a split at this edge improves the utility relative to the earlier sequential partitioning of the route.




\noindent  \textbf{{Compute score.}}
We first do a soft scheduling of the batches of all waypoints in $R_{i,j}$ using the \emph{first-fit} scheduling policy, mapping them to their \emph{preferred interval}, which is assumed to be free. Say there are $|R_{i,j}|$ such batches.
Then, for each edge edge $(\lambda_g, \lambda_{g+1}) \in R_{i,j}$, we find the overlap count $O_{g}$ as the number of batches from $\alpha_g$ whose execution slot overlaps with batches from all other activities. 
The overlap score for edge $(\lambda_g, \lambda_{g+1})$ is given as $\frac{O_{g}}{|R_{i,j}|}$.
If this value is higher, splitting the route at this point will avoid batches from having schedule conflicts in their preferred time slot.


Once the three scores are assigned, the edge with the highest \emph{split score} is selected as the split-point to divide the route into two sub-routes.
If a sub-route meets the energy constraint, it is selected as a \emph{valid trip}. 
If either or both of the sub-routes exceed the energy capacity, the splitting phase is recursively applied to that sub-route till all waypoints in the original route are part of some valid trip.

\subsubsection{Scheduling Phase}
Trips are then sorted in decreasing order of their total utility, and drones are allocated to trips depending their temporal availability.
Once assigned to a trip, the drone's scheduling is done by comparing the \algda and the \algtas used in \algjsc.


\subsubsection{The Core of \algvrc}
The \algvrc algorithm works as follows (Algorithm~\ref{alg:algvrc}).
After the initial \emph{routing phase}, energy-unfeasible routes are split into feasible ones in the \emph{splitting phase}, and then drones are allocated to them.
Finally, the \emph{scheduling phase} is applied to find the best schedule between the \algda and the \algtas.

\begin{algorithm}[htbp]
	$\mathbb{R} \gets $ \emph{routing phase}\;
	\For {$R_{ij} \in \mathbb{R}$} {
		\For {$(\lambda_g, \lambda_{g+1}) \in R_{ij}, i \le g < j$} {
			$s(g) \gets $ \emph{energy score} + \emph{utility score} + \emph{compute score}\;
		}
	}
	$\mathbb{R}' \gets $ \emph{splitting phase} based on scores $s(i), 1 \le i \le n$\;
	\For {$d_j$ assigned to $R_{ij} \in \mathbb{R}'$} {
		apply best scheduling among \algda and \algtas on $R_{ij}$\;
	}
	
	\caption{$\algvrc(A, D)$
	}
	\label{alg:algvrc}
\end{algorithm}

\subsubsection{Time Complexity of \algvrc}
In the \emph{routing phase}, the modified \algknn takes $\mathcal{O}(kn)$, with $n$ waypoints and $k$ number of neighbors. The \algtwoopt algorithm has time complexity $\mathcal{O}(n^4)$. Hence this phase overall has a cost of $\mathcal{O}(n^4)$.

In the \emph{splitting phase}, calculating the energy score for a route with length $n$ edges takes $\mathcal{O}(n)$. Calculating the energy score has $\mathcal{O}(n^2)$ complexity, and calculating the compute score has $\mathcal{O}(n)$ complexity. Considering a recursion of length $n-1$, the complexity of this phase is $\mathcal{O}(n^3)$
Combining \algda and \algtas, \algvrc's \emph{overall complexity} is $\mathcal{O}(n^4)$ in the worst case.


\section{Performance Evaluation}\label{sec:evaluation}
\subsection{Experimental Setup}

The \algopt solution is implemented using IBM's CPLEX MILP solver v12~\cite{cplex2009v12}. It uses Python to wrap the objective and constraints, and invokes the parallel solver. Our \algjsc and \algvrc heuristics have a sequential implementation using native Python. 
By default, these scheduling algorithms on our workloads run on an \emph{AWS c5n.4xlarge VM} with Intel Xeon Platinum 8124M CPU, $16$ cores, $3.0 \unit{GHz}$, and $42 \unit{GB}$ RAM.
\algopt runs on $16$ threads and the heuristics on $1$ thread.



We perform real-world benchmarks on flying, hovering, DNN computing and endurance, for a fleet of custom, commercial-grade drones. The X-wing quad-copter is designed with a top speed of $6 \unit{m/s}$ ($20 \unit{km/h}$), $<120 \unit{m}$ altitude, a $24000 \unit{mAh}$ Li-Ion battery and a payload capacity of $3 \unit{kg}$. It includes dual front and downward HD cameras, GPS and LiDAR Lite, and uses the Pixhawk2 flight controller. It also has an NVIDIA Jetson TX2 compute module with $4$-Core ARM64 CPU, $256$-core Pascal CUDA cores, $8 \unit{GB}$ RAM and $32 \unit{GB}$ eMMC storage. 
The maximum flying time is $\approx 30 \unit{min}$ with a range of $3.5 \unit{km}$.
Based on our benchmarks, we use the following drone parameters in our analytical experiments. 

\begin{center}
\small
\setlength\tabcolsep{2pt}
\begin{tabular}{c|c|c|c|c}
\toprule
 $s$ &  $\epsilon^f$ &  $\epsilon^h$ &  $\epsilon^c$ &  $E$ \\
\midrule
$4 \unit{m/s}$ & $750 \unit{J/s}$ & $700 \unit{J/s}$ & $20 \unit{J/s}$ & $1350 \unit{kJ}$\\
\bottomrule
\end{tabular}
\end{center}


\begin{figure*}[t]
\centering
  \subfloat[Utility per drone, RND]{
    \includegraphics[width=0.42\textwidth]{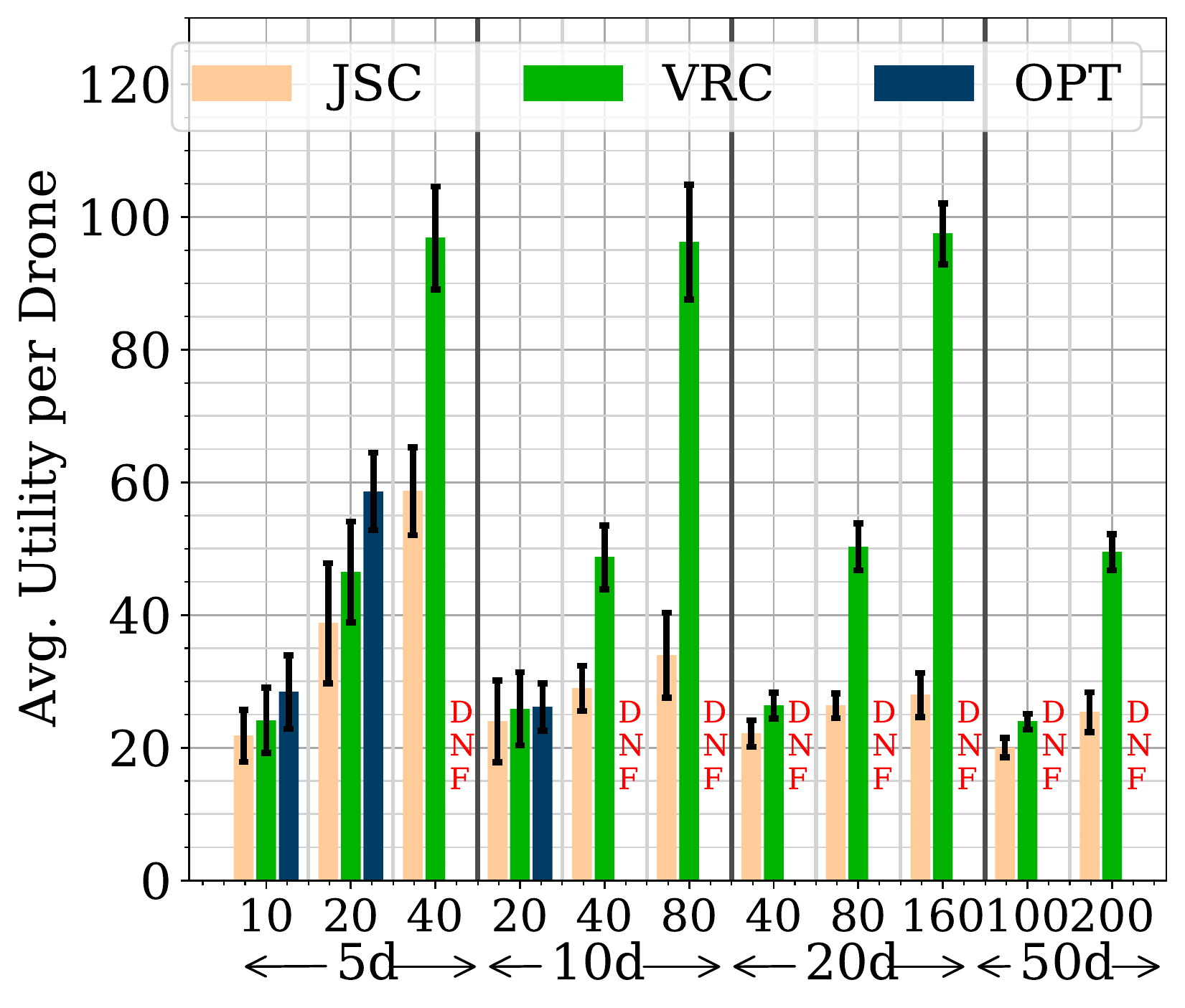}
    \label{fig:exp:w1:util}
  }
  \subfloat[Utility per drone, DFS]{
    \includegraphics[width=0.42\textwidth]{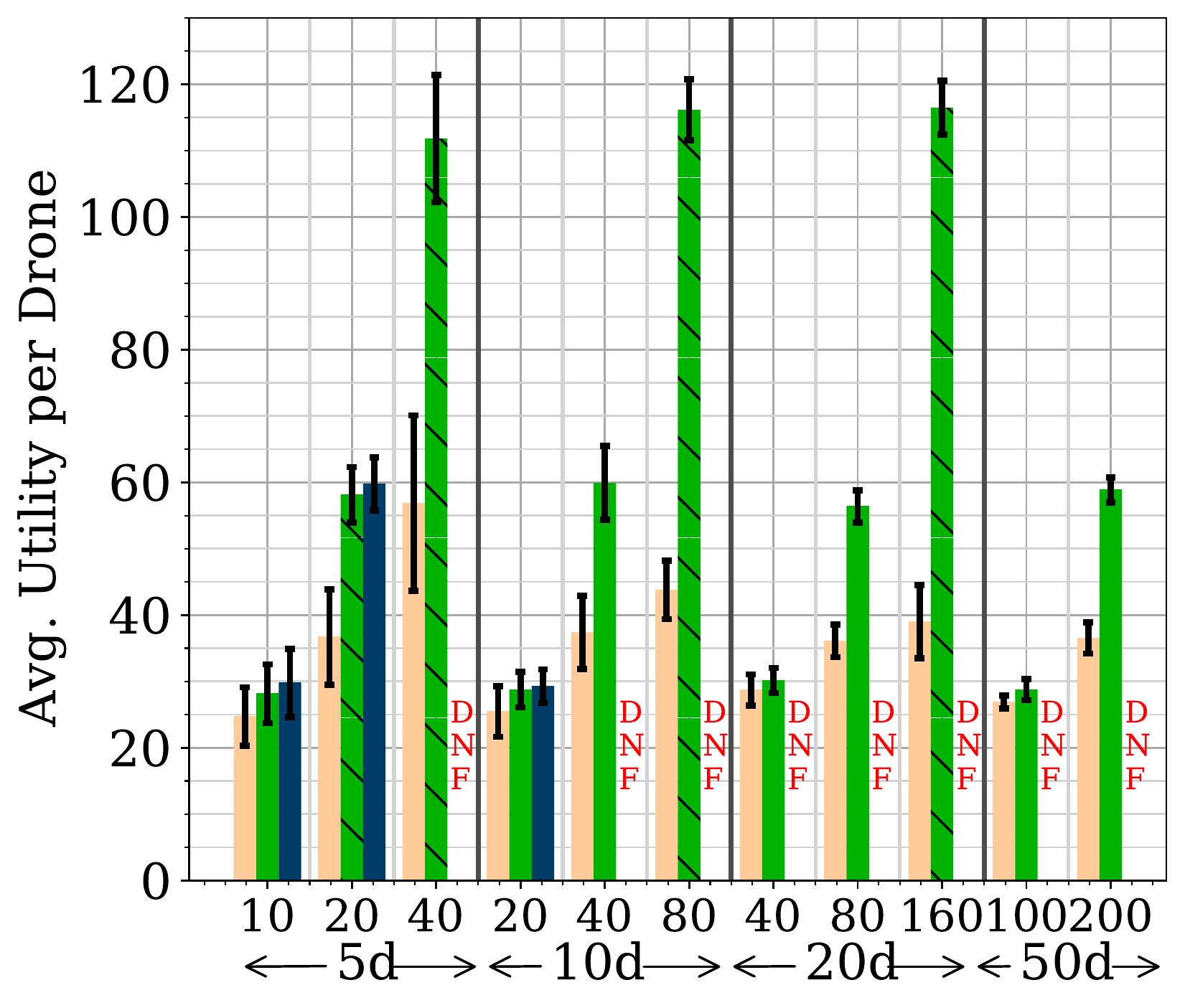}
    \label{fig:exp:w2:util}
  }
\hfill
  \subfloat[Alg. runtime, RND]{
    \includegraphics[width=0.42\textwidth]{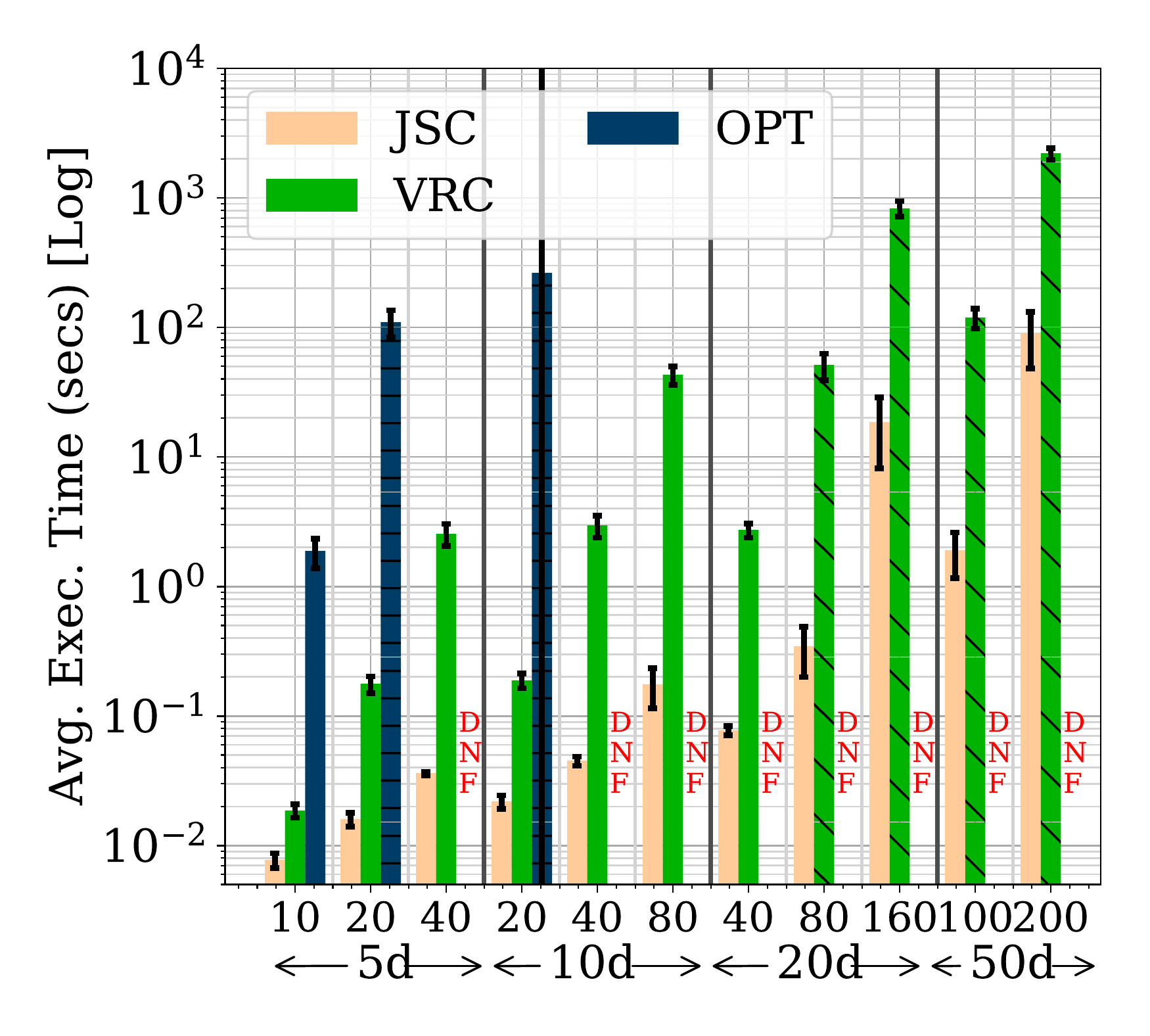}
    \label{fig:exp:w1:exec}
  }
  \subfloat[Alg. runtime, DFS]{
    \includegraphics[width=0.42\textwidth]{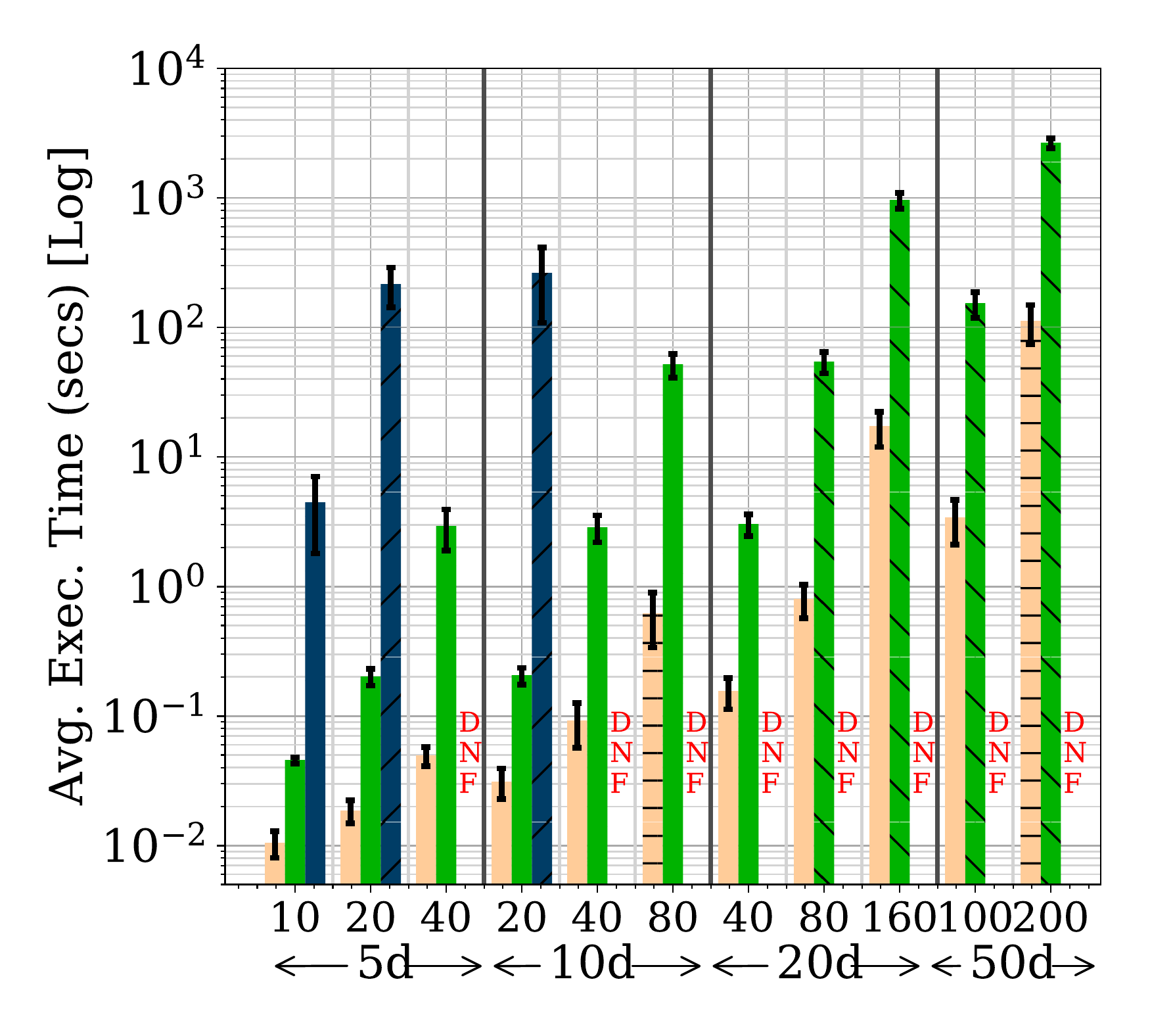}
    \label{fig:exp:w2:exec}
  }
\caption{\emph{Expected utility per drone} and \emph{algorithm runtime} of the three MSP algorithms, for the RND and DFS workloads on MNet. On the X axis, the number of drones (outer) and activities per drone (inner) increase. \algopt is solved on $16\times$ cores while \algjsc and \algvrc run on just $1$. DNF indicates \algopt did not finish.}
\label{fig:exp:util-exec}
\end{figure*}

\subsection{Workloads}

We evaluate the scheduling algorithms for two \emph{application workloads}: Random (RND) and Depth First Search (DFS). Both have a maximum mission time of $4 \unit{h}$ over multiple trips. In the \emph{RND workload}, $n$ waypoints are randomly placed within a $3.5 \unit{km}$ radius from the depot, and with a random activity start time within $(0,240] \unit{mins}$. This is an adversarial scenario with no spatio-temporal locality. The \emph{DFS workload} is motivated by realistic traffic monitoring needs. We perform a depth-first traversal over a $3.5 \unit{km}$ radius of our local city's road network, centered at the depot. With a $\mathcal{P}=\frac{1}{10}$ probability, we pick a visited vertex as an activity waypoint; $\mathcal{P}$ grows by $\frac{1}{10}$ for every vertex that is not selected, and $n$ are chosen. The start time of these activities monotonically grows.

The table below shows the activity and drone \emph{scenarios} for each workload. These are based on reasonable operational assumptions and schedule feasibility. We vary the data capture time ($\bar{t}-t$); batching interval ($\beta$); batch execution time on $2$ DNNs ($\rho_M$, $\rho_R$)\footnote{We run \emph{SSD Mobilenet v2 DNN} (MNet, $\rho_M$)~\cite{Sandler_2018_CVPR}, popular for analyzing drone footage~\cite{wang2018bandwidth}, and \emph{FCN Resnet18 DNN} (RNet, $\rho_R$)~\cite{long2015fully} on the TX2.}; deadline ($\delta$); utility ($\gamma$); and number of drones ($m$). The \emph{load factor} $x$ decides the count of activities per mission, $n = x \cdot m$. Drones take at most  $r_{\max}=\frac{n}{m}$ trips.
\begin{center}
\small
\setlength\tabcolsep{2pt}
\begin{tabular}{c|c|c|c|c|c|c|c|c}
\toprule 
 $\bar{t}-t$ &  $\beta$ & $\rho_M$ & $\rho_R$ & $\delta$ & $\gamma$ & $m$ & $x$ & $n= x \cdot m$ \\
\midrule
$[1,5]$ & $60\unit{s}$ & $11\unit{s}$ & $98\unit{s}$ &$120\unit{s}$ & $[1,5]$ & $5,10,20,50$ & $2, 4, 8$ & $10,\ldots,200$\\
\bottomrule
\end{tabular}
\end{center}



\noindent For brevity, RNet is only run on DFS. $10$ \emph{instances} of each of these $33$ viable workload scenarios are created. We run \algopt, \algjsc and \algvrc for each to return a schedule and expected utility.



\ysnoted{Try and provide citations to justify above assumptions/workloads}




\subsection{Experimental Results}


Figures~\ref{fig:exp:w1:util},~\ref{fig:exp:w2:util} and~\ref{fig:exp:w2r:util} show the \emph{expected utility per drone} for the schedules from the $3$ algorithms, for different drone counts and activity load factors. Similarly, Figures~\ref{fig:exp:w1:exec},~\ref{fig:exp:w2:exec}, and~\ref{fig:exp:w2r:exec} show the \emph{algorithm execution time} (log, $secs$) for them. Each bar is averaged for $10$ instances and the standard deviations shown as whiskers. The per drone utility lets us uniformly compare the schedules for different workload scenarios. The \emph{total utility} -- MSP objective function -- is the product of the per drone utility shown and the drone count. \algopt \emph{did not finish} (DNF) within $7 \unit{h}$ for scenarios with $40$ or more activities.

\begin{figure}[t]
\centering
  \subfloat[Utility per drone]{
    \includegraphics[width=0.42\textwidth]{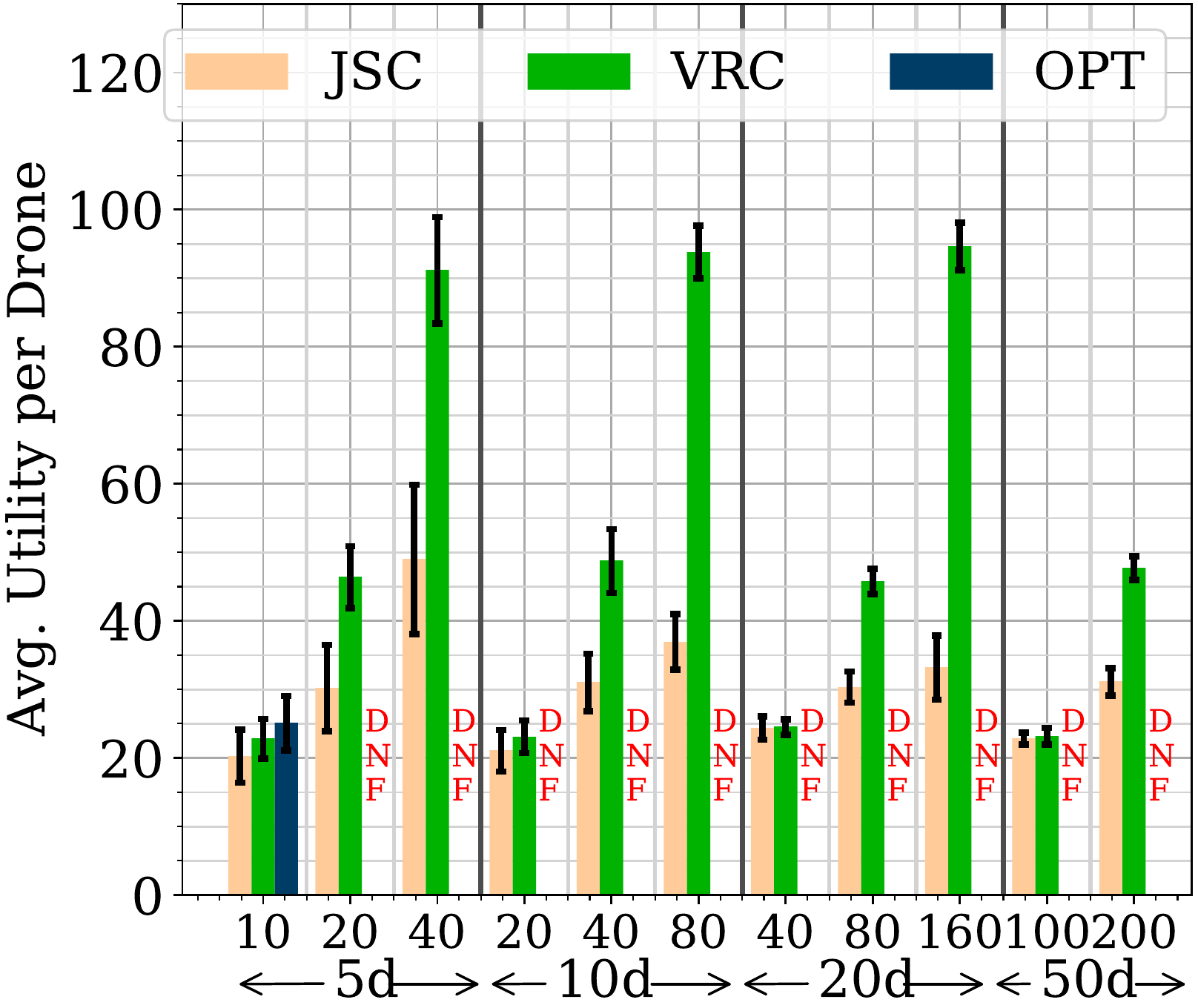}
    \label{fig:exp:w2r:util}
  }%
\hfill
  \subfloat[Alg. runtime]{
    \includegraphics[width=0.42\textwidth]{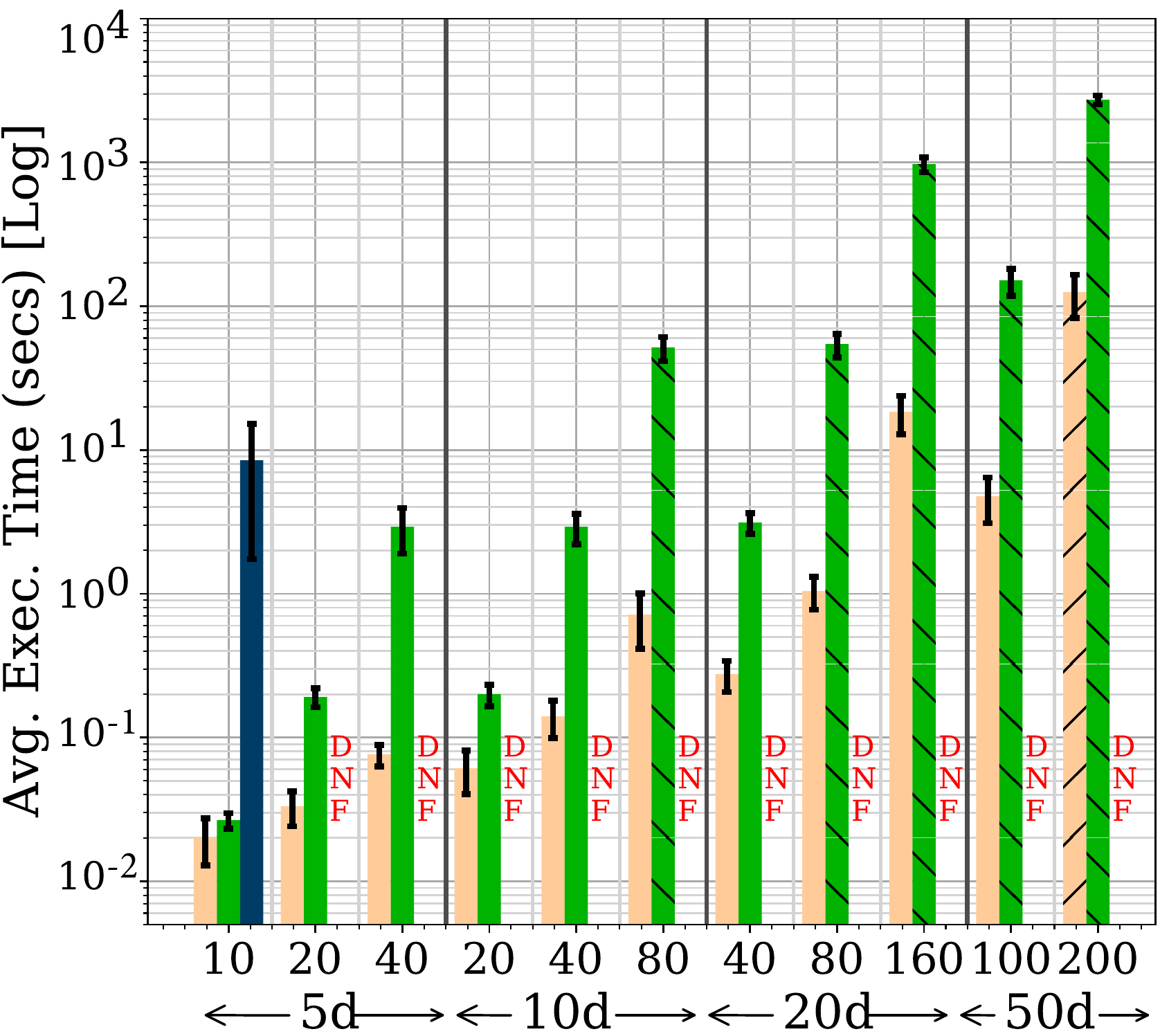}
    \label{fig:exp:w2r:exec}
  }
\caption{\emph{Expected utility per drone} and \emph{algorithm runtime} for RNet on DFS.}
\label{fig:exp:util-exec:w2r}
\end{figure}

\subsubsection{\algopt offers the highest utility, if it completes executing, followed by \algvrc, and \algjsc}
Specifically, for the $5$-drone scenarios for which \algopt completes, it offers an average of $42\%$ higher expected utility than \algjsc. \algvrc gives $26\%$ more average utility than \algjsc for these scenarios, and $75\%$ more for all scenarios they run for.
%
This is as expected for \algopt. Since a bulk of the energy is consumed for flying and hovering, \algvrc, which starts with an energy-efficient route, schedules more activities within the time and energy budget, as compared to \algjsc. 
This is evidenced by Figure~\ref{fig:exp:act}, which reports for MNet the \emph{average fraction of activities}, which are submitted and successfully scheduled by the algorithms. The remaining activities are not part of any trip. Among all workloads, \algjsc only schedules $60\%$ of activities, VRC $90\%$, and \algopt $98\%$. So \algopt and \algvrc are better at packing routes and analytics on the UAVs.
\algopt and \algvrc offer more utility for the DFS workload than RND
since $\geq 96\%$ of DFS activities are scheduled.
They exploit the spatial and temporal locality of activities in DFS.

\subsubsection{The average flying time per activity in each trip is higher for \algvrc compared to \algjsc} Interestingly, at $728 \unit{s}$ vs. $688 \unit{s}$ per activity, the route-efficient schedules from \algvrc manage to fly to waypoints farther away from the depot and/or from each other, within the energy constraints, when compared to the schedules from \algjsc. 
\ysnoted{??? Avg dist of waypoints visited by VRC in a trip vs JSC in a trip; Former may be more.}
As a result, it schedules a larger fraction of the activities to gain a higher expected utility.

\ysnoted{comments on std dev}

\begin{figure}[tbp]
\centering
  \subfloat[RND]{
    \includegraphics[width=0.42\textwidth]{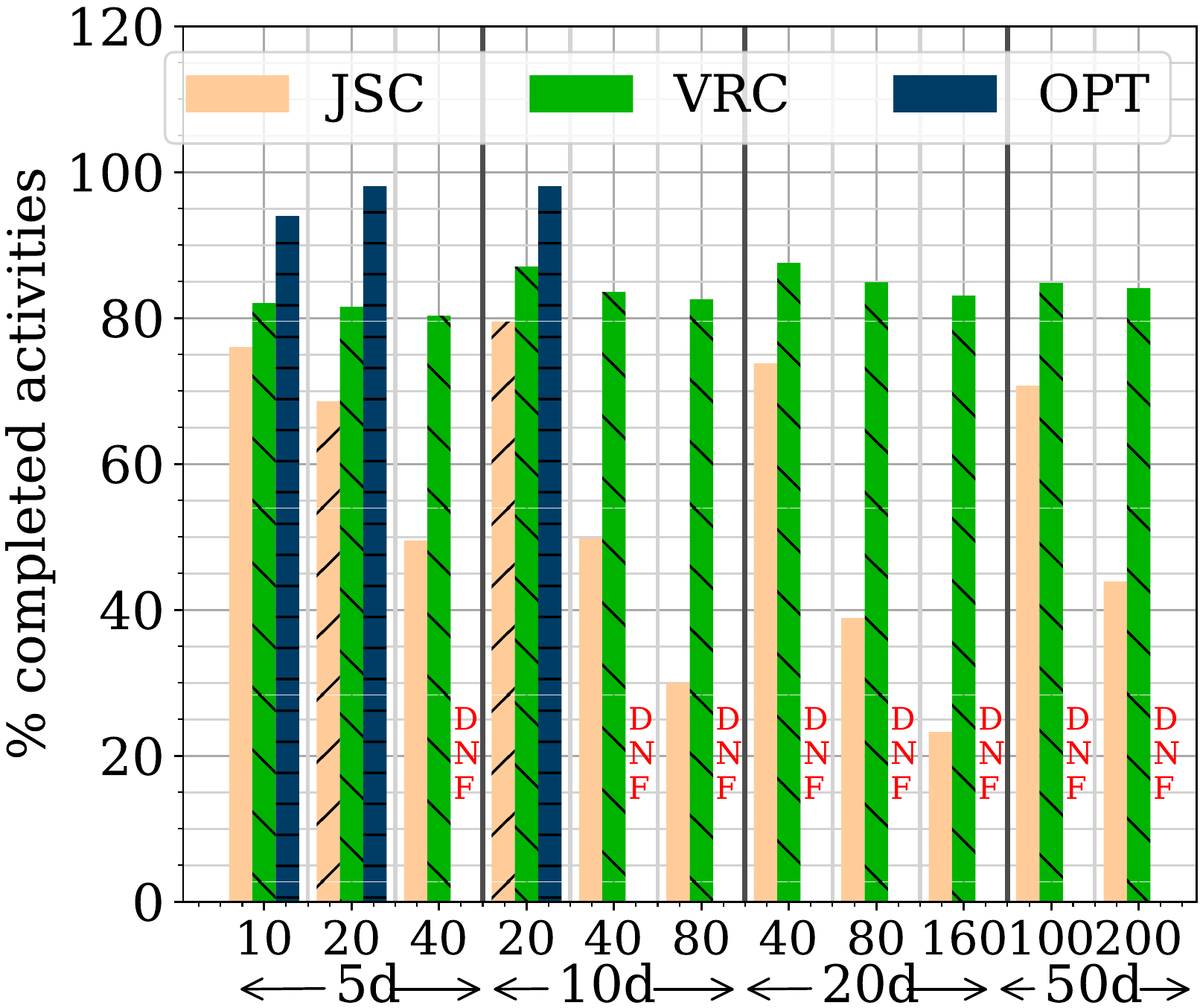}
    \label{fig:exp:w1:act}
  }
	\hfill
  \subfloat[DFS]{
    \includegraphics[width=0.42\textwidth]{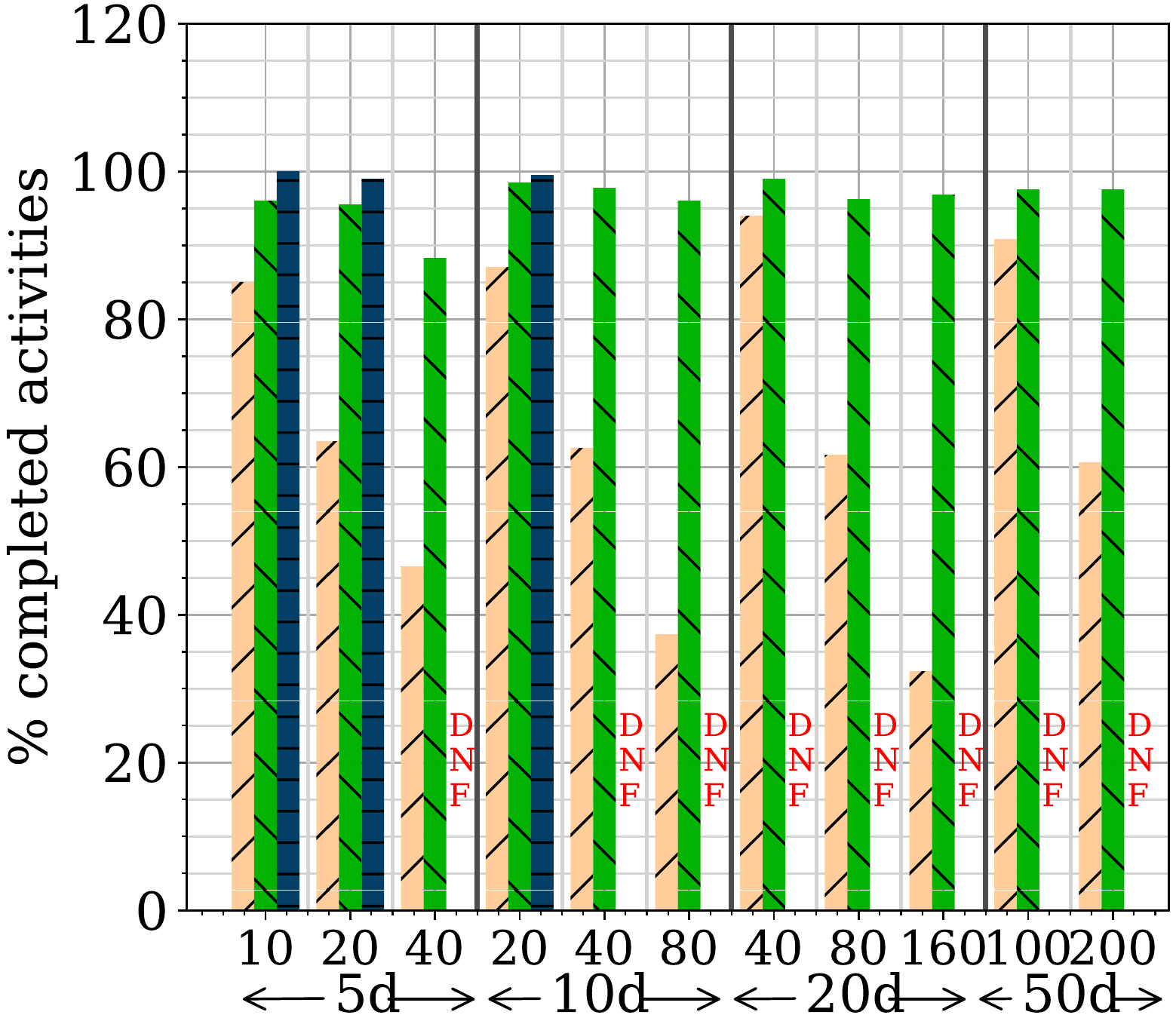}
    \label{fig:exp:w2:act}
}
\caption{Fraction (\%) of submitted activities scheduled per mission for MNet.}
\label{fig:exp:act}
\end{figure}

\subsubsection{The execution times for \algvrc and \algjsc match their time complexity}
We use the execution times for \algjsc to schedule the $300+$ workload instances to fit a \emph{cubic function} in $n$, the number of activities, to match its time complexity of $\mathcal{O}(n^3 \cdot l^2)$; since in our runs, $l \in [1,5]$ and $l \leq n$, we omit that term in the fit.
Similarly, we fit a \emph{degree-4 polynomial} for \algvrc in $n$.
The \emph{correlation coefficient} for these two fits are high at $0.86$ and $0.99$, respectively. So, the real-world execution time of our scheduling heuristics match our complexity analysis.


\subsubsection{\algopt is the slowest to execute, followed by \algvrc and \algjsc}
Despite \algopt using $16\times$ more cores than \algjsc and \algvrc, its average execution times are $>100 \unit{s}$ for just $20$ activities. The largest scenario to complete in reasonable time is $40$ activities on $5$ drones, which took $7 \unit{h}$ on average. This is consistent with the NP-hard nature of MSP. As our mission window is $4 \unit{h}$, any algorithm slower than that is not useful. 

\algjsc is fast, and on average completes within $1 \unit{s}$ for up to $80$ activities. Even for the largest scenario with $50$ drones and $200$ activities, it takes only $90 \unit{s}$ for RND and $112 \unit{s}$ for DFS.
\algvrc is slower but feasible for a larger range of activities than \algopt. It completes within $3 \unit{min}$ for up to $100$ activities. But, it takes $\approx 45 \unit{min}$ to schedule $200$ activities on $50$ drones. 
%

\subsubsection{The choice of a good scheduling algorithm depends on the fleet size and activity count}
From these results, we can conclude that \algopt is well suited for small drone fleets with about $20$ activities scheduled per mission. This completes within minutes and offers about $20\%$ better utility than \algvrc.
\algvrc offers a good trade-off between utility and execution time for medium workloads with $100$ activities and $50$ drones. This too completes within minutes and gives on average about $75\%$ better utility than \algjsc and schedules over $80\%$ of all submitted activities.
For large fleets with $200$ or more activities being scheduled, \algjsc is well suited with fast solutions but has low utility and leaves a majority of activities unscheduled.





\subsubsection{A higher load factor increases the utility, but causes fewer \% of activities to be scheduled} 
As $x$ increases, we see that the utility derived increases. This is partly due to adequate energy and time being available for the drones to complete more activities in multiple trips.
E.g., for the 5-drone case, we use load factors of $x=\{2,4,8,16,32\}$ for \algjsc and \algvrc. There is a consistent growth in the total utility, from $109$ to $523$ for \algjsc, and from $121$ to $1080$ for \algvrc. There is also a corresponding growth in the number of trips performed per mission, e.g., from $7.5$ to $43.2$ in total for \algvrc.

However, the fraction of submitted activities that are scheduled falls. For \algjsc, its activity scheduled \% linearly drops with $x$ from $76\%$ to $23\%$. But for \algvrc, the scheduled \% stays at about $80\%$ until $x=8$, at which point the activities saturate the drone fleet's capacity and the scheduled \% falls linearly to $37\%$ for $x=32$.
Interestingly, the utility increases faster than the number of activities scheduled for \algvrc. This is due to the scheduler favoring activities that offer a higher utility, while avoiding those with a lower utility, causing a $20\%$ increase in utility received per activity between $x=8$ to $x=32$. 

\subsubsection{Longer-running edge analytics offer lower on-time utility} 
\ysnoted{Why does DNN2 not compelte on OPT for 5/10s and beyond, within time?}
We run the same scenarios using RNet and MNet DNNs for the DFS workload. For both \algjsc and \algvrc, the \emph{data capture utility} that accrues from their schedules for the two DNNs is similar. However, since the RNet execution time per batch is much higher than MNet, there is a drop in \emph{on-time utility}, by about $32\%$ for both \algjsc and \algvrc, due to more deadline violations.
As a result, this also causes a drop in total utility for RNet by about $15.9\%$ for \algjsc and $19\%$ for \algvrc, relative to MNet. Even for \algopt we see a similar trend with a $15.8\%$ drop in the total utility. The runtime of \algjsc and \algvrc do not exhibit a significant change between RNet and MNet.

\subsubsection{Effect of real-world factors}
The \textit{expected utilities} reported above are under ideal conditions. Here, we evaluate their practical efficacy by emulating these schedules using real drone traces to get the \textit{effective utility} and \textit{trip completion rate}. 

Ideally, each trip generated by \algjsc and \algvrc should complete be within a drone's energy capacity. 
In practice, factors such as wind or non-linear battery performance can increase or decrease the actual energy consumed. Figure~\ref{fig:exp:real-trips} shows the \% of scheduled trips that do not complete when using the drone trace. With $<80$ activities, all trips complete (not plotted). But but with $\geq 80$ activities, some trips in the planned schedule start to fail. At worst, $12\%$ of trips are incomplete in some schedules. So the effect of real-world factors can be significant. Interestingly, for the failed trips, an average $3.6\%$ and a maximum of $7.9\%$ extra battery capacity would allow them to finish the trip.
So by maintaining a buffer battery capacity of $\approx 10\%$ when planning a schedule, we can ensure that the drones can complete a trip and return to the depot.

\begin{figure*}[t]
\centering
  \subfloat[MNet, RND]{
    \includegraphics[width=0.31\textwidth]{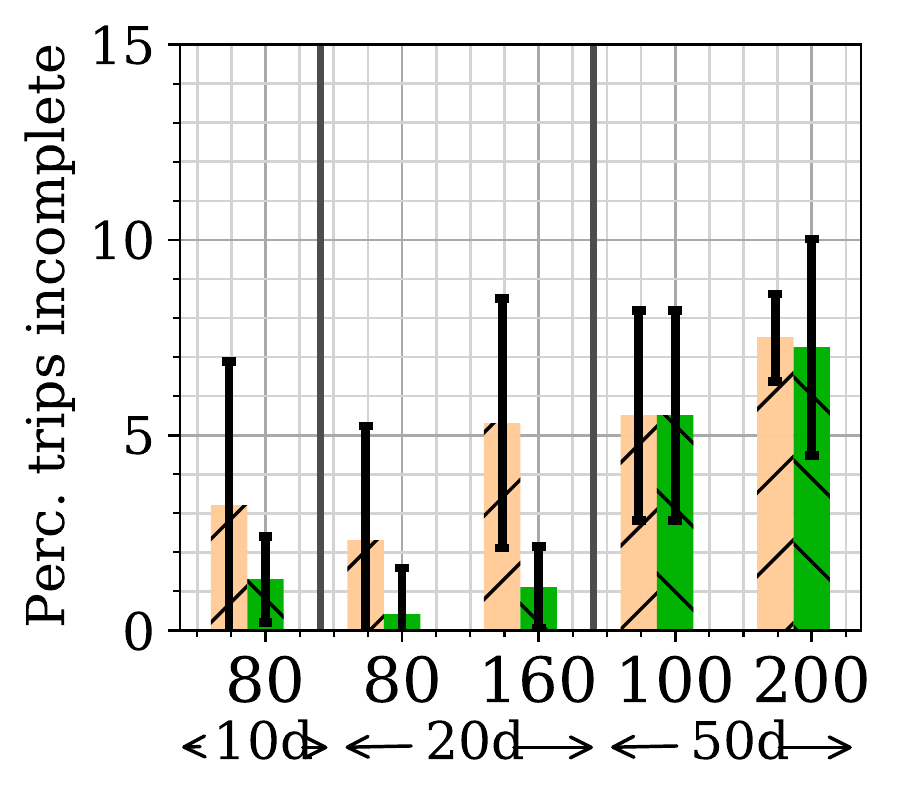}
    \label{fig:exp:w1:util_2}
  }
  \subfloat[MNet, DFS]{
    \includegraphics[width=0.31\textwidth]{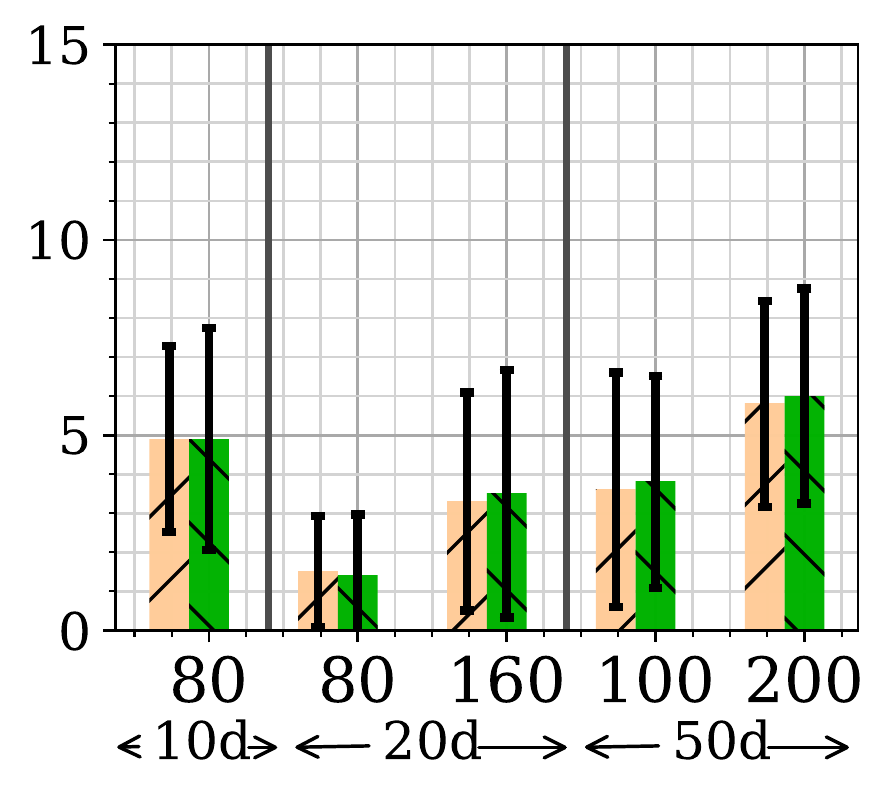}
    \label{fig:exp:w2:util_2}
  }
  \subfloat[RNet, DFS]{
    \includegraphics[width=0.31\textwidth]{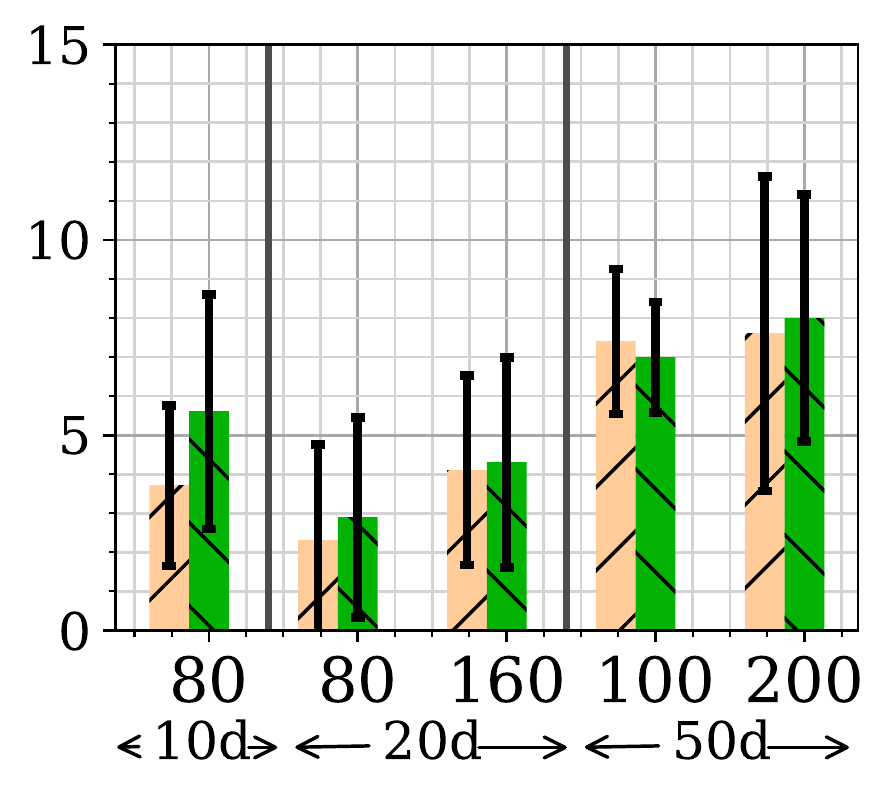}
    \label{fig:exp:w1:exec_2}
  }
\caption{\% of incomplete trips using drone trace. It is $0\%$ for $<80$ activities.}
\label{fig:exp:real-trips}
\end{figure*}

\section{Conclusion and Future Work}
\label{sec:conclusions}
\ysnoted{Add 1-2 paras with high level takeways from the current problem and its results...}

\ysnoted{parallel variants of VRC to speed up processing?}

This paper introduces a novel Mission Scheduling Problem (\prob) that co-schedules routes and analytics for drones, maximizing the utility for completing activities.
We proposed an optimal algorithm, \algopt, and two time-efficient heuristics, \algjsc and \algvrc.
Evaluations using two workloads, varying drone counts and load factors, and real traces exhibit different trade-offs between utility and execution time. \algopt is best for $\leq20$ activities and $\leq 5$ drones, \algvrc for $\leq100$ activities and $\leq 50$ drones, and \algjsc for $>100$ activities. 
Their time complexity matches reality. 
The schedules work well for fast and slow DNNs, though on-time utility drops for the latter.

The MSP proposed here is just one variant of an entire class of fleet co-scheduling problems for drones. Other architectures can be explored considering 4G/5G network coverage to send edge results to the back-end, or even off-load captured data to the cloud if it is infeasible to compute on the drone. This will allow more pathways for data sharing among UAVs and GS, but impose energy, bandwidth and latency costs for communications. Even the routing can be aware of cellular coverage to ensure deterministic off-loading on a trip.

We can use alternate cost models by assigning an operational cost per trip or per visit, and convert the MSP into a profit maximization problem. The activity time-windows may be relaxed rather than be defined as a static window. Drones with heterogeneous capabilities, in their endurance, compute capabilities, and sensors, will also be relevant for performing diverse activities such as picking up a package using an on-board claw and visually verifying it using a DNN.

Finally, we need to deal with dynamics and uncertainties like wind, obstacles and non-linear battery or compute behavior that affect flight paths, energy consumption and utilities. We can use probability distributions and stochastic approaches coupled with real-time information, which can decide and enact on-line rescheduling and rerouting while on a trip. 
Such on-the-fly route updates for drones also allows us to accept and schedule activities continuously, rather accumulate a mission over hours, and prioritize the profitable activities. These will also need to be validated using more robust real-world experiments and traces.
\let\thefootnote\relax\footnotetext{\noindent \textbf{Acknowledgments.} \textit{This work is supported by AWS Research Grant, Intelligent Systems Center at Missouri S\&T, and NSF grants CCF-1725755 and SCC-1952045. A. Khochare is funded by a Ph.D. fellowship from RBCCPS, IISc, Bangalore.
S. K. Das was partially supported by a Satish Dhawan Visiting Chair Professorship at IISc. We thank RBCCPS for access to the drone, and Vishal, Varun and Srikrishna for helping collect the drone traces.}}








\clearpage

\bibliographystyle{ieeetr}
\bibliography{main-infocom-21}

\end{document}